\renewcommand*\backref[1]{\ifx#1\relax \else (Cited on #1) \fi}
\theoremstyle{plain}
\newtheorem{theorem}{Theorem}[section]
\newtheorem{lemma}[theorem]{Lemma}
\theoremstyle{definition}
\newtheorem{assumption}[theorem]{Assumption}
\theoremstyle{remark}
\newtheorem{remark}[theorem]{Remark}
\newcommand{\E}{\mathbb{E}}
\newcommand{\R}{\mathbb{R}}
\newcommand{\N}{\mathbb{N}}
\newcommand{\bP}{\mathbb{P}}
\newcommand{\bQ}{\mathbb{Q}}
\newcommand{\1}{\mathbbm{1}}
\newcommand{\w}{\mathbf{w}}
\newcommand{\cS}{\mathcal{S}}
\newcommand{\cX}{\mathcal{X}}
\newcommand{\ip}[1] {\left\langle #1 \right\rangle }
\newcommand{\cR}{\mathcal{R}}
\newcommand{\cD}{\mathcal{D}}
\newcommand{\cI}{\mathcal{I}}
\newcommand{\cH}{\mathcal{H}}
\newcommand{\cA}{\mathcal{A}}
\newcommand{\cO}{\mathcal{O}}
\newcommand{\bfx}{\mathbf{x}}
\newcommand{\bfy}{\mathbf{y}}
\newcommand{\bfz}{\mathbf{z}}
\newcommand{\bfw}{\mathbf{w}}
\newcommand{\Oc}{\mathcal{O}}
\newcommand{\tPhi}{\tilde{\Phi}}
\DeclareMathOperator*{\argmin}{arg\,min}
\DeclareMathOperator*{\argmax}{arg\,max}
\title{Collaborative Learning in Kernel-based Bandits for Distributed Users}
\author[$\ddagger$]{Sudeep Salgia}
\author[*]{Sattar Vakili}
\author[$\ddagger$]{Qing Zhao}
\affil[$\ddagger$]{School of Electrical \& Computer Engineering, Cornell University, Ithaca, NY, \emph{\{ss3827,qz16\}@cornell.edu} }
\affil[*]{MediaTek Research, UK, \emph{sattar.vakili@mtkresearch.com}}
\begin{document}

\maketitle

\begin{abstract}
We study collaborative learning among distributed clients facilitated by a central server. Each client is interested in maximizing a personalized objective function that is a weighted sum of its local objective and a global objective. Each client has direct access to random bandit feedback on its local objective, but only has a partial view of the global objective and relies on information exchange with other clients for collaborative learning. We adopt the kernel-based bandit framework where the objective functions belong to a reproducing kernel Hilbert space. We propose an algorithm based on surrogate Gaussian process (GP) models and establish its order-optimal regret performance (up to polylogarithmic factors). We also show that the sparse approximations of the GP models can be employed to reduce the communication overhead across clients.
\end{abstract}

\section{Introduction}

 \subsection{Kernel-based Bandit Problems}
We consider zeroth-order stochastic optimization where the objective function is unknown but assumed to live in a Reproducing Kernel Hilbert Space (RKHS) associated with a known kernel. A learner sequentially chooses points to query and receives noisy feedback on the function values at the query points. The goal is to converge quickly to the optimizer of the underlying unknown function. A typical application is learning a classifier or predictor 
based on random instances (e.g., text prediction based on past typing instances). The stochastic objective function is the \emph{expected} reward representing the performance of the predictors. The objective function is unknown due to the unknown probabilistic model of the dependency of the next word on the past. The learner can try a predictor (i.e., the current query point) on the current instance and observes the associated reward, which is a random realization of the objective function value at the queried point.  \\

The above zeroth-order stochastic optimization problem can be equivalently viewed as a continuum-armed kernelized-bandit problem~\citep{Kanagawa2018}. The RKHS model allows for a greater expressive power for representing a broad family of objective functions. In particular, it is known that the RKHS of typical kernels, such as Mat\'ern family of kernels, can approximate almost all continuous functions on compact subsets of $\R^d$~\citep{Srinivas2010}.
Under a centralized setting where all data is available at a single decision maker, several algorithms have been proposed, including  UCB-based algorithms~\citep{Srinivas2010, Chowdhury2017}, batched pure exploration~\citep{li2022gaussian}, and tree-based domain shrinking~\citep{Salgia2020GPThreDS}. \\

In many applications, we encounter stochastic optimization in a distributed setting with multiple clients. For example, in a federated learning setting of the text prediction problem, multiple mobile users aim to learn a text predictor by leveraging each other's local data but without sharing their raw data~\citep{Li2020Survey}. Another example, more specific to the kernelized bandit setting, is hyperparameter tuning in federated learning~\citep{Dai2020FBO}. Due to the large expressive power of RKHS functions, hyperparameter tuning in centralized settings is often carried out by modelling the accuracy of the underlying neural network as an unknown function of the hyperparameters and belonging to an RKHS.
However, when a model is being learned by a collective effort of multiple clients, the hyperparameters of the model often also need to be tuned collaboratively using the information from all clients, resulting in a distributed stochastic optimization problem of a RKHS function. An interesting application of distributed optimization of RKHS functions is in the analysis of the collaborative training of neural nets using the recent theory of Neural Tangent Kernel~\citep{jacot2018neural}. \\



Comparing with the centralized setting, distributed stochastic optimization gives rise to two new challenges. First, the probabilistic model underlying local data generation may be different across clients, resulting in different local objective functions and hence the pursuit of different optimizers at distributed clients. Consider for example the text prediction problem. The probabilistic dependency of the next word on the past typed words may exhibit discrepancies caused by  geographical or cultural differences across users. This drift in data distribution calls for learning algorithms that strike a balance between data sharing and targeted local performance. In particular, how heterogeneous clients can learn collectively by leveraging each other's data while at the same time converging to an optimizer that takes into account each client's local data model. 
The prevailing approach where all clients aim to optimize a common global objective function given by a weighted sum of each client's local objectives with prefixed weight operates at one extreme of the spectrum by focusing solely on reaping the benefit of data sharing while ignoring each client's local performance.  
The second challenge is the communication overhead associated with information exchange across clients. While communication is necessary for collaborative learning, low communication cost is of practical importance in many applications.
These challenges have not been adequately addressed in the literature, especially within the kernel-based distributed bandit framework (see Sec.~\ref{sub:related_work} for a detailed discussion of related work).  \\


\subsection{Main Contributions}

Consider a distributed setting with $K$ clients connected to a central server that facilitates information exchange, a de facto setting for training large-scale machine learning models. Each client can sequentially query its local objective function (i.e., the expected reward with respect to its local data model) and observe an unbiased noisy estimate of the function value at the chosen query point. To handle heterogeneity in local data models, we adopt the personalization framework, wherein the objective of each client is to learn a mixture of their local function and the global function, where the global objective function is an average of all local functions. The personalization framework strikes a balance between the generalization capabilities of the global function and targeted local learning tailored towards each client's local model. By allowing the full range of the mixture weights (i.e., the personalization parameters), we address the full spectrum in terms of the balance between data sharing and targeted local performance.   \\


Within this personalization framework, each client is interested in maximizing its own \emph{personalized objective function}, which differs from those of other clients. The challenge here is that the personalized objectives ${f}_k$ at each user $k$ involves other users' local functions $\{h_j\}_{j\neq k}$ which cannot be observed by user $k$. In other words, the local observations at each end user $k$ provide only a partial view of the personalized objective ${f}_k$ it aims to optimize. This necessitates collaborative learning across users in order to achieve a sublinear regret order. The required user collaboration has two aspects: information sharing and collaborative exploration. Given that $h_j$ can only be queried at user $j$, it is easy to see that information sharing is necessary for each user $k$ to learn its personalized objective ${f}_k$. The need for collaborative exploration is a more nuanced and complex issue. Consider the extreme case where every user's local observations are made available to all other users. If user $k$ simply uses a centralized learning algorithm to optimize its own personalized objective function ${f}_k$, its local model may concentrate too quickly 
around its own optimal point, resulting in insufficient exploration of $h_k$ around optimal points of the personalized local models of other users, especially for a user whose optimal point is far away from that of user $k$. More specifically, even though each user aims to converge to different optimal models, collaborative exploration--in addition to information sharing---is necessary due to the coupling across users' personalized objective functions. This adds another layer of nuance to the exploration-exploitation trade-off as the exploration at every user needs to serve the goal of greater good at the network level instead of solely being sufficient for local exploitation.  \\


We propose a new learning algorithm for kernelized distributed bandits with probabilistically heterogeneous clients. Referred to as Collaborative Exploration with Personalized Exploitation (CEPE), this algorithm is built on the key structure of interleaving exploration epochs for the collaborative learning of the global function with exploitation epochs aiming to maximize individual local performance. 
This interleaving exploration-exploitation structure not only addresses the trade-off between of data leveraging and targeted local performance, but also effectively controls the information exchange across clients for communication efficiency through a tandem design of the accompanying communication protocol. The lengths of the exploration and exploitation epochs are carefully controlled to balance learning efficiency and communication overhead associated with information exchange. \\

We analyze the performance of the proposed algorithm and show that its regret is of $\tilde{\cO}(KT^{\frac{2}{3- \kappa}})$\footnote{$\tilde{\cO}(\cdot)$ hides the polylogarthmic factors.} and its overall communication cost is of $\cO(T^{\frac{2}{3- \kappa}})$, where $\kappa \in (0,1]$ is a parameter that depends on the smoothness of the kernel.  We further establish a lower bound on achievable regret within the kernel-based collaborative learning framework for the class of Squared Exponential and Matern kernels, which are arguably the most widely used kernels in practice. The regret lower bound matches the order of the regret incurred by CEPE up to a poly-logarithmic factor, establishing the optimality of the proposed algorithm. Empirical studies demonstrate the effectiveness of the CEPE against several baseline algorithms, bolstering our theoretical results. \\

To further reduce the communication cost, we propose a variant of CEPE which employs sparse approximations of the surrogate Gaussian Process modelling used in the query and update rules of CEPE. Referred to as S-CEPE, this algorithm reduces the communication cost to $\cO(T^{\frac{2\kappa}{3- \kappa}})$ while preserving the regret guarantee of CEPE. Numerical examples demonstrate effectiveness of S-CEPE in practice as S-CEPE is shown to offer a $14$-fold reduction in communication cost while sacrificing regret by a factor of less than $2$ when compared to CEPE. \\

\subsection{Related Work}
\label{sub:related_work}

Kernel-based bandit problem in the centralized setting has been extensively studied in the literature~\citep{Chowdhury2017,Srinivas2010,Valko2013a, Hensman2013, Titsias2009, Calandriello2017, Seeger2003}. The setup considered in these studies, where all the data is available at a central server, is inherently different from the distributed setup considered in this work. Please refer to  Remark~\ref{remark:singlekernel} for additional comparison with these results. \\

Various studies have considered the discrete multi-armed bandit problem and linear bandit problem in the distributed setting.
\cite{Shi2021FMAB} and~\cite{Shi2021Personalization} consider the multi-armed bandit (MAB) problem in a distributed setup with homogeneous and heterogeneous client reward distributions respectively. \cite{Li2020} considered the same problem with distributionally identical clients, with a focus on ensuring privacy.~\cite{Hillel2013} consider the pure exploration problem for multi-armed bandits connected over a network. Other representative works in the MAB setting include~\citep{Liu2010DistributedMAB, Shahrampour2017, Landgren2017, Sankararaman2019}.~\cite{Wang2019distributedBandit} proposed the DELB algorithm for the distributed linear bandit setting with focus on communication efficiency.~\cite{Dubey2020} proposed a privacy preserving algorithm for distributed linear bandits with clients connected via various network topologies. Other studies exploring linear bandits in a distributed setup include~\citep{Korda2016, Huang2021, Ghosh2021, Amani2022, Salgia2022LinearBandits}. The kernel-based bandit setting is arguably more challenging than the classical MAB or the linear bandit setting. \\

The problem of kernel-based bandits in collaborative setups has not received sufficient attention. There exist only a handful studies.~\cite{Du2021CoPEKB} considered the problem of pure exploration in kernel-based collaborative learning over a finite action set. Another work in this direction is by~\cite{Li2022KernelContextBandits} where the authors propose a communication-efficient algorithm with a communication cost of $\cO(\gamma_T^{3})$.~\cite{Dubey2020KernelBandits} consider distributed kernel bandits over a graph where they consider additional kernel based modelling to measure task similarity among the different objective functions of the clients. However, their proposed algorithm suffers from a communication cost that grows linearly with the time horizon. In this work, we consider the more challenging continuum-armed setup with a focus on minimizing cumulative regret as opposed to simple regret. Moreover, the personalization framework considered in this work is different from the other studies in kernel-based collaborative learning. Furthermore, the S-CEPE algorithm proposed in this work incurs a communication cost of $\cO(T^{\frac{2\kappa}{3- \kappa}})$, improving upon the $\cO(T^{3\kappa})$ bound obtained for the algorithm in~\cite{Li2022KernelContextBandits}. \\

The problem of distributed first-order stochastic convex optimization, often referred to as Federated Learning (FL)~\citep{mcmahan2017communication}, is yet another direction of related work that has been gaining a lot of attention in the recent times. See~\cite{abdulrahman2021survey, Kairouz2021} for a survey of recent advances. Several works in FL have also considered the personalization framework~\citep{Smith2017, Jiang2019, Wang2019, Deng2020, fallah2020personalized, Hanzely2020, Kulkarni2020, Mansour2020} as considered in this work. Despite some similarities in the setup, the zeroth-order kernel-based setup considered in this work requires very different tools and techniques for algorithm design and analysis as compared to the first-order stochastic convex optimization problem considered in FL. 

\section{Problem Formulation and Preliminaries}
\label{sec:problem_formulation}

In this section, we present the problem formulation followed by preliminaries on Gaussian Processes (GPs), an important tool in the design of the proposed algorithm.

\subsection{Problem Formulation}
\label{sub:fl_personalization}

We consider a collaborative learning framework with a star topology consisting of a central server and $K>1$ clients. Each client $i \in \{1, 2 ,\dots, K\}$ is associated with a local \emph{observation} function, $h_i : \cX \to \R$, which it can access by querying any point $x$ in the domain $\cX \subset \R^d$ and consequently receiving a noisy evaluation $y = h_{i}(x) + \epsilon$, where $\epsilon$ is the noise term. These observation functions $h_i$ are known to live in the Reproducing Kernel Hilbert Space (RKHS) associated with a known positive definite kernel $k:\cX\times\cX\rightarrow \R$. We make the following assumptions on the observation functions and noise, commonly adopted in kernel-based learning~\citep{Srinivas2010, Chowdhury2017}.



\begin{assumption}\label{assumption1}
The RKHS norm of $h_i$ is bounded by a known constant $B_i$: $\|h_i\|_{H_k}\le B_i$, for each $i \in \{1,2,\dots, K\}$.
\end{assumption}

\begin{assumption}
The noise terms $\epsilon$ are assumed to be zero mean $R$-sub-Gaussian variables, i.e., $\E \left[ e^{\zeta \epsilon} \right] \leq \exp \left( \zeta^2 R^2/2 \right)$ for all $\zeta \in \R$, that are distributed independently across queries and clients. 
\label{assumption:sub_gaussian}
\end{assumption}

\begin{assumption}
For each given $n \in \N$ and $f \in H_k$ with $\|f\|_{H_k} \leq B$, there exists a discretization $\cD_n$
of $\cX$ such that $|f(x) - f([x]_{\cD_n})| \leq 1/\sqrt{n}$, where $[x]_{\cD_n} = \argmin_{y \in \cD_n} \|x - y\|_2$ is the closest point in $\cD_n$ to $x$ in terms of $\ell_2$ distance and $|\cD_n| \leq C B^d n^{d/2}$, where $C$ is a constant independent of $n$ and $B$.
\label{assumption:discretization}
\end{assumption}

\subsubsection{Personalized Reward Function}

In a typical collaborative learning framework, the clients work together to optimize a common objective function, referred to as the \emph{global} reward function, which is defined as,
\begin{align}
    g(\cdot) := \frac{1}{K} \sum_{i = 1}^K h_i(\cdot).
\end{align}
For example,~\cite{Shi2021FMAB} considered the finite-armed bandit problem with the above mentioned objective. In a more general case, the weights $1/K$ can be replaced by any weight distribution over the $K$ clients. In typical collaborative learning problems, the motivation behind optimizing such a \emph{global} function is to leverage information across clients. The availability of additional data facilitated by the participation of more clients helps in learning a better model that generalizes well. In the context of this work, the argument of $g$ corresponds to the parameters of the model and consequently, the maximizer of $g$ corresponds to the desired optimal parameters. \\

While collaboratively optimizing $g$ can help leverage information across clients, optimizing the global objective $g$ may not always faithfully represent the interests or desires of the clients, especially in the scenario of statistically heterogeneous clients. In contrast, it can be detrimental for a client to only stick to model obtained using local data and forgo the additional generalization capabilities offered by collaboration. Following~\cite{Deng2020, Hanzely2020, Shi2021Personalization}, we consider a collaborative learning model with personalization, where each client can choose to trade-off between the generalization capabilities of the global reward function and a locally focused reward function that aims to fit local data only.
In particular, we consider a personalized \emph{reward} function for each client $i$ given by,
\begin{align}
    f_i(\cdot) = \alpha_i h_i(\cdot) + (1 - \alpha_i)g(\cdot),\label{eqn:personalised_reward_function}
\end{align}
which is a linear combination of the global reward and the local observation functions. 
Here, the personalization parameter $\alpha_i\in (0,1)$ captures the level of personalization for each client $i$. A choice of $\alpha_i \to 0$ corresponds to adopting a fully collaborative approach. Increasing the value of $\alpha_i$ reduces the impact of collaboration and as $\alpha_i \to 1$, the client moves towards adopting a model that solely targets local observations, forgoing all benefits of collaboration. In this work, we focus on the cases $\alpha_i \in (0,1)$\footnote{This condition can be relaxed to $\max_i \alpha_i > 0$ and $\min_i \alpha_i < 1$.} which corresponds to the scenario where clients have different objective functions and need to collaborate while having different personalized objectives. We would like to point out that this personalized setting is more challenging than the scenario where all clients collaboratively optimize a \emph{common} global objective (all $\alpha_i =0$) or all of them are interested in optimizing their local observation functions (all $\alpha_i = 1$).
For clarity of terminology, we refer to $g$ as the global reward, $h_i$ as the local observation, and $f_i$ as the personalized reward for client $i$.

\subsubsection{Communication Protocol and Cost}
\label{sub:communication_cost_definition}


Each personalized reward $f_i$ depends on the observation functions $h_j$ of other clients $j \neq i$ which are cannot be accessed by client $i$. The clients thus need to share information about their local observations with each other, in order to make learning $f_i$ feasible and incur a sublinear regret. In our setup, the clients can communicate only with central server and cannot communicate with other clients, a typical setting in networks with a star topology. At each time $t$, each client $i$ is allowed to send and receive a message to and from the server. For simplicity, we assume that all the communication is synchronized, a commonly adopted assumption in collaborative learning frameworks. We assign a unit cost to each communication of a scalar from the client to other clients through the server. 

\subsubsection{The learning objective}

A collaborative learning policy $\pi=\{\pi_{i,t}\}_{t\ge 1, \, i=1,2,\dots,K}$ specifies, for each client~$i$, which point $x_{i,t}$ to query at each time $t$ based on available information. 
The performance of $\pi$ is measured in terms of total cumulative regret summed over all clients and over a learning horizon of length $T$. Specifically, 
\begin{align}
    R_{\pi}(T) = \sum_{i = 1}^K \sum_{t = 1}^T (f_i(x^*_i) - f_i(x_{i,t})),
    \label{eqn:regret_definition}
\end{align}
where $x_i^* = \argmax_{x \in \cX} f_i(x)$ is the optimal decision variable for client $i$'s personalized reward function $f_i$. \\

The objective is to design a policy $\pi$ which minimizes the total cumulative regret. We provide high probability regret bounds that hold with probability at least $1 - \delta_0$ for any given $\delta_0 \in (0,1)$, a stronger performance guarantee than bounds on expected regret.

\subsection{Preliminaries on GP Models}\label{sec:GP}

In this section, we overview the GP models and some useful confidence intervals for the RKHS elements based on GP models, which are central to our policy design. \\

A Gaussian Process model, $H(x)$, $x \in \cX$ is a random process indexed on $\cX$, for which all finite subsets $\{H(x_i)\}_{i=1}^n$, $n\in\N$, have a multivariate Gaussian distribution, with mean $\mu(x)=\E[H(x)]$ and covariance $k(x, x')=\E[(H(x)-\mu(x))(H(x')-\mu(x'))]$. We assume $\mu(x)=0$, for all $x\in\cX$. When used as a prior for a data generating process under Gaussian noise, the conjugate property provides closed form expressions for the posterior mean and covariance of the GP model. In particular, given a set of observations $\cH_t=\{\bfx_t, \bfy_t\}$, where $\bfx_t = (x_1, x_2, \dots, x_t)^{\top}$, $\bfy_t = (y_1, y_2, \dots, y_t)^{\top}$, the following expressions can be derived for the posterior mean and covariance of the GP model 
\begin{align}
	\mu_t(x) & = \E \left[H(x) |\cH_t\right] = k_{\bfx_t, x}^{\top} \left( K_{\bfx_t, \bfx_t} + \lambda I_t \right)^{-1} \bfy_t \label{eq:posterior_mean} \\
	k_t(x, x') & = \E\left[(H(x) - \mu_t(x))(H(x') - \mu_t(x'))|\cH_t\right]  \nonumber \\
	& = k(x, x') - k_{\bfx_t, x}^{\top} \left( K_{\bfx_t, \bfx_t} + \lambda I_t \right)^{-1} k_{\bfx_t, x'}. \label{eq:posterior_variance}
\end{align}
In the above expressions, $k_{\bfx_t, x} = [ k(x_1, x), .., k(x_t, x) ]^{\top}$, $K_{\bfx_t, \bfx_t}$ is the $t \times t$ covariance matrix  $[k(x_i, x_j)]_{i,j = 1}^t$, $I_t$ is the $t \times t$ identity matrix and $\lambda$ is the variance of the Gaussian noise. We use $\sigma_t^2(\cdot)=k_t(\cdot,\cdot)$ to denote the posterior variance of the GP model. \\


Following a standard approach in the literature (e.g., see ~\cite{Chowdhury2017, Shekhar2018, Srinivas2010}), we use GPs to model an unknown function $h$ belonging to the RKHS corresponding to the covariance kernel of the GP. In particular, we assume a \emph{fictitious} GP prior $H$ over the \emph{fixed}, unknown function $h$ along with \emph{fictitious} Gaussian distribution for the noise. We would like to emphasize that these assumptions are modelling techniques used as a part of algorithm and not a part of the problem setup. The reason for introducing this fictitious model is that the posterior mean and variance defined above, respectively, provide powerful tools to predict the values of $h$, and to quantify the uncertainty in the prediction. We formalize this statement in the following lemma.

\begin{lemma}[Theorem~$1$ in~\cite{vakili2021optimal}]
Under Assumptions~\ref{assumption1} and~\ref{assumption:sub_gaussian}, provided observations $\cH_t = \{\bfx_t, \bfy_t\}$ as specified above with the query points $\bfx_t$ chosen independently of the noise sequence, we have, for a fixed $x\in\cX$, with probability at least $1-\delta$,
\begin{align*}
|h(x)-\mu_t(x)| \leq \beta(B, \delta)\sigma_t(x),
\end{align*}
where $\beta(B, \delta)=B+R\sqrt{(2/\lambda)\log\left(2/\delta\right)}$.
\label{lemma:concentration_bound}
\end{lemma}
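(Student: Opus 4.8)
The plan is to decompose the prediction error $h(x)-\mu_t(x)$ into a deterministic approximation (bias) term and a stochastic (noise) term, bound each separately, and then combine them. Writing the observation vector as $\mathbf{y}_t=\mathbf{h}_t+\epsilon_t$, where $\mathbf{h}_t=(h(x_1),\dots,h(x_t))^\top$ and $\epsilon_t=(\epsilon_1,\dots,\epsilon_t)^\top$, the posterior mean in~\eqref{eq:posterior_mean} splits as $\mu_t(x)=\tilde\mu_t(x)+k_{\bfx_t,x}^\top(K_{\bfx_t,\bfx_t}+\lambda I_t)^{-1}\epsilon_t$, where $\tilde\mu_t(x):=k_{\bfx_t,x}^\top(K_{\bfx_t,\bfx_t}+\lambda I_t)^{-1}\mathbf{h}_t$ is the noise-free kernel ridge predictor. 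Hence $h(x)-\mu_t(x)=\big(h(x)-\tilde\mu_t(x)\big)-k_{\bfx_t,x}^\top(K_{\bfx_t,\bfx_t}+\lambda I_t)^{-1}\epsilon_t$, where the first term is fixed once the query points are fixed and the second is linear in the noise.

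For the bias term, I would pass to the RKHS feature representation. With $\phi(x)=k(\cdot,x)$ so that $h(x)=\langle h,\phi(x)\rangle_{H_k}$ and $\mathbf{h}_t=\Phi_t^\top h$ for the feature operator $\Phi_t=[\phi(x_1),\dots,\phi(x_t)]$, one rewrites $\tilde\mu_t(x)=\langle h,\Phi_t(K_{\bfx_t,\bfx_t}+\lambda I_t)^{-1}k_{\bfx_t,x}\rangle_{H_k}$, so that $h(x)-\tilde\mu_t(x)=\langle h,r_x\rangle_{H_k}$ with residual $r_x:=\phi(x)-\Phi_t(K_{\bfx_t,\bfx_t}+\lambda I_t)^{-1}k_{\bfx_t,x}$. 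Cauchy--Schwarz and Assumption~\ref{assumption1} give $|h(x)-\tilde\mu_t(x)|\le\|h\|_{H_k}\,\|r_x\|_{H_k}\le B\,\|r_x\|_{H_k}$. Expanding $\|r_x\|_{H_k}^2$ using $\Phi_t^\top\Phi_t=K_{\bfx_t,\bfx_t}$ and $\langle\phi(x),\Phi_t v\rangle_{H_k}=k_{\bfx_t,x}^\top v$, and comparing with~\eqref{eq:posterior_variance}, a short matrix manipulation yields the identity $\|r_x\|_{H_k}^2=\sigma_t^2(x)-\lambda\,k_{\bfx_t,x}^\top(K_{\bfx_t,\bfx_t}+\lambda I_t)^{-2}k_{\bfx_t,x}\le\sigma_t^2(x)$, hence the clean bound $|h(x)-\tilde\mu_t(x)|\le B\sigma_t(x)$.

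For the noise term, the hypothesis that $\bfx_t$ is chosen independently of the noise is exactly what is needed: conditioning on $\bfx_t$, the weight vector $a:=(K_{\bfx_t,\bfx_t}+\lambda I_t)^{-1}k_{\bfx_t,x}$ is deterministic, and $a^\top\epsilon_t=\sum_i a_i\epsilon_i$ is a weighted sum of independent zero-mean $R$-sub-Gaussian variables (Assumption~\ref{assumption:sub_gaussian}), hence $R\|a\|_2$-sub-Gaussian. The standard sub-Gaussian tail bound then gives $|a^\top\epsilon_t|\le R\|a\|_2\sqrt{2\log(2/\delta)}$ with probability at least $1-\delta$. To control $\|a\|_2$ I would reuse the identity from the bias step, which reads $\lambda\|a\|_2^2=\lambda\,k_{\bfx_t,x}^\top(K_{\bfx_t,\bfx_t}+\lambda I_t)^{-2}k_{\bfx_t,x}=\sigma_t^2(x)-\|r_x\|_{H_k}^2\le\sigma_t^2(x)$, so that $\|a\|_2\le\sigma_t(x)/\sqrt{\lambda}$. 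Combining on the event (of probability at least $1-\delta$) where the noise bound holds gives $|h(x)-\mu_t(x)|\le B\sigma_t(x)+R\sqrt{(2/\lambda)\log(2/\delta)}\,\sigma_t(x)=\beta(B,\delta)\sigma_t(x)$, as claimed.

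The step I expect to be the main obstacle is establishing the single matrix identity $\|r_x\|_{H_k}^2=\sigma_t^2(x)-\lambda\|a\|_2^2$, since it does double duty: its left-hand side being nonnegative simultaneously delivers both the bias bound $\|r_x\|_{H_k}\le\sigma_t(x)$ and the noise-weight bound $\sqrt{\lambda}\,\|a\|_2\le\sigma_t(x)$, and it is what makes the constant collapse to the clean $B+R\sqrt{(2/\lambda)\log(2/\delta)}$ rather than an expression inflated by an information-gain factor. Conceptually, the essential hypothesis doing the work is the independence of the query points from the noise: it permits treating $a$ as fixed and thereby replaces the self-normalized martingale argument (which would introduce a $\sqrt{\gamma_t}$-type penalty) with an elementary sub-Gaussian concentration valid for a single fixed $x$.
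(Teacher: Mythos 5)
Your proof is correct and is essentially the argument behind the cited result (Theorem~1 of \cite{vakili2021optimal}); the paper itself does not reprove this lemma but imports it. The bias/noise decomposition, the identity $\|r_x\|_{H_k}^2=\sigma_t^2(x)-\lambda\|a\|_2^2$ delivering both bounds at once, and the use of query-point/noise independence to replace a self-normalized martingale bound with elementary sub-Gaussian concentration are exactly the ingredients of that proof, and your constants match the stated $\beta(B,\delta)$.
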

This lemma shows that $\mu_t$ may be used to predict the value of $h$, where the error in the prediction is bounded by a factor of $\sigma_t$ with high probability. In CEPE, $\sigma_t(\cdot)$ is used to guide exploration and $\mu_t(\cdot)$ is used to guide exploitation.   \\

Furthermore, using the predictive variance we also define the maximal information gain, $\gamma_t$, which characterizes the effective dimension of the kernel. It is defined as $\gamma_t := \max_{(x_1, x_2, \dots, x_t) \in \cX^{t}} \sum_{i = 1}^{t} \sigma^2_{i - 1}(x_i)$. Bounds on $\gamma_t$ for several common kernels are known~\citep{Srinivas2012, Vakili2020infogain} and are increasing sublinear functions of $t$, i.e., $\gamma_t = \Oc(t^{\kappa})$ for $\kappa \in (0,1]$.

\section{The CEPE Policy}
\label{section:policy}

In this section, we describe our proposed algorithm for the problem of personalized kernel bandits.

\subsection{Algorithm Description}\label{sec:flex_description}

The design of a collaborative learning policy such as CEPE significantly deviates from those for single client settings. If one were to design a collaborative learning algorithm by simply deploying a classical centralized kernel-based learning algorithm like GP-UCB or GP-TS~\cite{Chowdhury2017} at each client to learn their personalized rewards, then such an algorithm could result in a trivial regret, linearly growing with $T$. This can be attributed to the fact that the personalized objective function $f_i$ of client $i$ contains the local observation functions $\{h_j\}_{j\neq i}$ of other clients that are not observable to client $i$. If client $i$ were to use GP-UCB on their personalized reward, then their query points would quickly concentrate around $x_i^*$ making it difficult for a client $j$ to satisfactorily learn $h_i$ (and hence $f_j$) especially in regions far away from $x_i^*$,  even if all the observations of client $i$ were available to client $j$. Hence, Collaborative Exploration is necessary for each client to learn the maximizer $x_i^*$ of its personalized objective function $f_i$.
This adds another layer of nuance to the exploration-exploitation trade-off as the exploration at any client needs to serve the goal of greater good instead of being sufficient for local exploitation.  \\

Inspired by \emph{deterministic sequencing of exploration and exploitation (DSEE)} algorithm for discrete bandits for centralized learning~\cite{Vakili2013}, we propose CEPE, a kernel-based collaborative learning policy that effectively trades off exploration and exploitation in a distributed learning setting. We first describe the design of the interleaving exploration and exploitation epochs, a central component in CEPE. Following that, we describe the communication protocol and the query policy adopted in CEPE that ensures a low communication cost along with an overall low regret.


\subsubsection{The interleaving epoch structure}

CEPE divides the time horizon of $T$ steps into interleaving epochs of exploration and exploitation of increasing lengths. The length of each epoch is fixed at the beginning of the algorithm based on $N_t:\N\rightarrow \R$, a positive non-decreasing sequence of real numbers that is assumed to have been provided as an input to CEPE. Let $\cA(t)$ denote the collection of time instants at which CEPE had carried out an exploration step up to time $t -1$. CEPE decides to explore at a time instant $t$ if $|\cA(t)| \leq N_t$, otherwise it decides to exploit at time $t$. As a concrete example, consider the case where $N_t = t^{2/3}$ for $t \in N$. At $t = 1$, $|\cA(1)| = 0$ as no point has been explored before it. Since $N_1 = 1 > |\cA(1)|$, CEPE explores at $t = 1$. For $t = 2$, evidently $|\cA(2)| = 1$ and $N_2 = 2^{2/3} > 1$ implying CEPE again explores at $t = 2$. Similarly, CEPE also explores at $t = 3$ as $|\cA(3)| = 2 < N_3$. However, at $t = 4$, $|\cA(4)| = 3$ while $N_4 < 3$. Consequently, CEPE begins an exploitative epoch at $t = 4$ and extends to $t = 5$ as $|\cA(5)| = 3 > N_5$. Once again at $t = 6$, $|\cA(6)| < N_6$ and CEPE decides to explore, terminating the first exploitative epoch and beginning the second exploratory epoch. This process repeats until the end of the time horizon.   \\

The design of the exploration-exploitation epochs is a central piece of the puzzle as it is related to both the communication cost and the regret due to its implicit links with the communication protocol of CEPE and the design of the update rule of the decision rule, as described below.

\subsubsection{Communication Protocol}

CEPE adopts a straightforward, easy to implement communication and message exchange protocol. All the communication between the clients and the server happens only during an exploration epoch. CEPE forgoes communication during exploitation epoch to ensure a low communication overhead. Thus, the separation between exploration and exploitation allows effective control of information exchange across clients for learning the global objective while simultaneously helping limit the communication overhead. \\

At every time instant $t$ during an exploration epoch, each client sends the value of the current decision variable queried, $x_{i ,t}$ along with the observed random reward, $y_{i,t}$ to the server. The server then broadcasts this to all clients. \\


We would like to point out that such a communication scheme does not violate any privacy concerns and is designed in a similar spirit to various distributed learning algorithms, especially in the scenario of Federated Learning. Similar to any FL algorithm, CEPE only communicates the current model parameters (the decision variable $x$ in this setup) and not the actual data. The only difference is that CEPE also communicates a random loss associated with the current decision variable, which also does not give access to any actual data held by the clients. Thus, CEPE also ensures privacy by avoiding data sharing as considered in FL setups.

\subsubsection{Query Policy}

The only step left to specify is the update rule of decision variable, $x$, used in CEPE. The update rule is based on whether the current time instant belongs to an explorative or an exploitative epoch and is guided by a fictitious Gaussian process (GP) prior for the elements of the RKHS, as described in Section~\ref{sec:GP}. During an exploratory epoch, the clients choose different points across the epoch for continual learning. In particular, client $i$ chooses $x_{i,t} = \argmax_{x \in \cX} \sigma_{t-1}^{(h_i)}(x)$, where $\sigma_{t-1}^{(h_i)}(\cdot)$ is the posterior standard deviation of the GP model of $h_i$ based on all the previous exploratory observations $\cS_{i,t} = \{ (x_{i,s}, y_{i,s}) : s \in \cA(t) \}$ of client $i$ according to equation~\eqref{eq:posterior_variance}. On the other hand, during an exploitation epoch the query point is fixed and chosen to maximize earning and forgo learning. Specifically, each client $i$ chooses a point with the highest predicted value based on the GP model, i.e., $x_{i,t} = \argmax_{x\in\cX} \mu_{t-1}^{(f_i)}(x),$ where $\mu_{t-1}^{(f_i)}(\cdot)$ is the posterior mean of the personalized reward function, given as
\begin{align*}
    \mu_{t-1}^{(f_i)}(\cdot) = \alpha_i \mu_{t-1}^{(h_i)}(\cdot) + \frac{(1 - \alpha_i)}{K} \sum_{j = 1}^K \mu_{t-1}^{(h_j)}(\cdot).
\end{align*}
In the above expression $\mu_{t-1}^{(h_j)}(\cdot)$ are the posterior means of the GP model of $h_j$ based on all the previous exploratory observations $\cS_{j,t}$ corresponding to client $j = 1, 2, \dots, K$. We also provide a pseudocode in Alg.~\ref{alg:fed_exp_2} that succinctly combines the three design components of CEPE.

\begin{algorithm}
	\caption{Collaborative Exploration with Personalized Exploitation (CEPE)}
	\label{alg:fed_exp_2}
	\begin{algorithmic}
		\STATE {\bfseries Input:} $\{B_j\}_{j = 1}^{K}$, the kernel $k(\cdot, \cdot)$, $\{N_t\}_{t \in \N}$
		\STATE Set $t \leftarrow 1$, $\cA(1) \leftarrow \emptyset$
	    \REPEAT
	    \IF{$|\cA(t)| < N_t$} 
	    \STATE $\cA(t + 1) = \cA(t) \cup \{t\}$
	    \ENDIF
	    \IF{$t \in \cA(t + 1)$}
	    \STATE $x_{i,t} = \argmax_{x \in \cX} \sigma_{t-1}^{(h_i)}(x)$
	    \ELSE
	    \STATE $x_{i,t} = \argmax_{x\in\cX} \mu_{t-1}^{(f_i)}(x)$
	    \ENDIF
	    \STATE Query the function $h_i$ at $x_{i,t}$ to obtain $y_{i,t}$
	    \IF{$t \in \cA(t+1)$}
	    \STATE Send $(x_{i,t}, y_{i,t})$ to the server which is then broadcast to all clients 
	    \ENDIF
            \STATE $t \leftarrow t + 1$
	    \UNTIL{$t = T$}
	\end{algorithmic}
\end{algorithm}

\subsection{Performance Analysis}

The theorem below establishes an upper bound on the regret performance of CEPE.

\begin{theorem}
Consider the kernel-based collaborative learning with personalized rewards setting described in Section~\ref{sub:fl_personalization}. Under Assumptions~\ref{assumption1} and~\ref{assumption:sub_gaussian} and for a given sequence $\{N_t\}_{t \in \N}$, the regret performance of CEPE satisfies, for any $\delta_0 \in (0,1)$, with probability at least $1-\delta_0$, 
\begin{eqnarray}
R_{\text{CEPE}}(T) =\cO\left(
K N_T + K T\sqrt{\frac{\gamma_{N_T}}{N_T} \log \left( \frac{T}{\delta_0}  \right)} \nonumber
\right)
\end{eqnarray}
In particular, the regret is minimized with the choice of $N_T=\Theta(T^{\frac{2}{3-\kappa}}(\log(T/\delta_0))^{\frac{1}{3}})$ in which case,
\begin{align*}
    R_{\text{CEPE}}(T) =\cO(KT^{\frac{2}{3 - \kappa}}(\log({T/\delta_0}))^{\frac{1}{3}} ). \nonumber
\end{align*}
\label{thm:regret_fed_exp}
\end{theorem}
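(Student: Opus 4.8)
The plan is to split the cumulative regret into the contributions of the exploration and the exploitation epochs, $R_{\text{CEPE}}(T)=R_{\text{expl}}+R_{\text{exploit}}$, and bound each separately. For the exploration term I would first note that, since $\{N_t\}$ is non-decreasing and CEPE explores at step $t$ only when $|\cA(t)|<N_t$, the total number of exploration steps over the horizon is at most $N_T+1$. Because $\|h_i\|_{H_k}\le B_i$ forces each $h_i$, hence each personalized reward $f_i=\alpha_i h_i+(1-\alpha_i)g$, to be uniformly bounded on $\cX$ (by $B_i\sup_x\sqrt{k(x,x)}$ via the reproducing property), every per-step regret $f_i(x_i^*)-f_i(x_{i,t})$ is bounded by a constant. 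Summing over the at-most-$N_T$ exploration steps and the $K$ clients yields $R_{\text{expl}}=\cO(KN_T)$, the first term in the bound.

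The exploitation term is where the work lies. I would first upgrade Lemma~\ref{lemma:concentration_bound}, which holds for a single fixed $x$, to a bound uniform over $\cX$, over the $K$ clients, and over all rounds. The key enabling fact is that the exploration queries $x_{i,s}=\argmax_x\sigma^{(h_i)}_{s-1}(x)$ depend only on past query \emph{locations} (the posterior variance is noise-independent), so they are independent of the noise and the lemma applies verbatim after conditioning. I would then instantiate the lemma at every point of the discretization $\cD_n$ from Assumption~\ref{assumption:discretization}, take a union bound over $\cD_n$, the $K$ models, and the $\le T$ rounds with per-event probability $\delta_0/(\text{union size})$, and transfer the bound from $\cD_n$ to all of $\cX$ via the discretization error $1/\sqrt n$. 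On an event of probability $\ge 1-\delta_0$ this gives, for each $h_j$, a confidence width $\beta(B_j,\delta')\sigma^{(h_j)}_{t-1}(x)$ with $\beta(B_j,\delta')=\cO(\sqrt{\log(T/\delta_0)})$ (the discretization errors contribute only lower-order terms), and by linearity these combine into a single width $w_{i,t-1}(x):=\alpha_i\beta_i\sigma^{(h_i)}_{t-1}(x)+\tfrac{1-\alpha_i}{K}\sum_j\beta_j\sigma^{(h_j)}_{t-1}(x)$ for $\mu^{(f_i)}_{t-1}$.

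Conditioned on this event, the per-step exploitation regret admits the decomposition $f_i(x_i^*)-f_i(x_{i,t})=[f_i(x_i^*)-\mu^{(f_i)}_{t-1}(x_i^*)]+[\mu^{(f_i)}_{t-1}(x_i^*)-\mu^{(f_i)}_{t-1}(x_{i,t})]+[\mu^{(f_i)}_{t-1}(x_{i,t})-f_i(x_{i,t})]$, whose middle term is non-positive because $x_{i,t}$ maximizes $\mu^{(f_i)}_{t-1}$, so the step regret is at most $w_{i,t-1}(x_i^*)+w_{i,t-1}(x_{i,t})\le 2\max_x w_{i,t-1}(x)$. The crucial observation is that during exploitation the posteriors are frozen at $|\cA(t)|$ samples, and since CEPE explores by maximum-variance sampling, monotonicity of the posterior variance together with the definition of $\gamma_n$ gives the uniform bound $|\cA(t)|\max_x(\sigma^{(h_j)}_{|\cA(t)|})^2\le\sum_{s\le|\cA(t)|}(\sigma^{(h_j)}_{s-1}(x_{j,s}))^2\le\gamma_{|\cA(t)|}$, i.e. $\max_x\sigma^{(h_j)}_{t-1}(x)\le\sqrt{\gamma_{|\cA(t)|}/|\cA(t)|}$. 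Each exploitation step therefore costs $\cO(\beta\sqrt{\gamma_{|\cA(t)|}/|\cA(t)|})$ with $\beta=\cO(\sqrt{\log(T/\delta_0)})$.

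It remains to sum this over the $\le T$ exploitation steps and the $K$ clients. Here I would use that on an exploitation step $|\cA(t)|\ge N_t$ and that $\gamma_n/n$ is non-increasing (as $\gamma_n=\cO(n^\kappa)$ with $\kappa\le1$) to replace the running count by the per-step bound $\sqrt{\gamma_{N_t}/N_t}$, and then argue $\sum_{t=1}^T\sqrt{\gamma_{N_t}/N_t}=\cO(T\sqrt{\gamma_{N_T}/N_T})$; for the polynomially growing $N_t$ of interest this holds because the exponent of the summand exceeds $-1$, so the sum is governed by its terminal behaviour. This produces the second term $\cO(KT\sqrt{(\gamma_{N_T}/N_T)\log(T/\delta_0)})$. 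I expect this summation across the interleaved epochs---carefully relating the running sample count $|\cA(t)|$ to $N_t$ and to the terminal budget $N_T$---to be the main obstacle, since it is what couples the exploration budget to the exploitation cost and drives the final rate, with the uniform variance bound of the previous step as its essential ingredient. Finally, balancing $KN_T$ against $KT\sqrt{(\gamma_{N_T}/N_T)\log(T/\delta_0)}$ over $N_T$, i.e. taking $N_T=\Theta(T^{2/(3-\kappa)}(\log(T/\delta_0))^{1/3})$ and using $\gamma_{N_T}=\cO(N_T^\kappa)$, makes both terms of order $KT^{2/(3-\kappa)}(\log(T/\delta_0))^{1/3}$, giving the optimized bound.
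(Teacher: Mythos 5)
Your proposal is correct and follows essentially the same route as the paper's proof: bounding the $\cO(KN_T)$ exploration regret via the sup-norm bound implied by the RKHS norm, and bounding the exploitation regret by combining the confidence bound of Lemma~\ref{lemma:concentration_bound} (made uniform via Assumption~\ref{assumption:discretization} and a union bound) with the maximum-variance sampling argument $\max_x \sigma_n^2(x) \le \gamma_n/n$ and the information-gain bound. The details you flag as the main obstacle---relating $|\cA(t)|$ to $N_t$ and summing $\sqrt{\gamma_{N_t}/N_t}$ over the interleaved exploitation steps---are handled exactly as you describe.
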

Recall that $\gamma_t = \Oc(t^{\kappa})$ with $\kappa \in (0,1]$ corresponds to the maximal information gain of the kernel $k$. Specifically, if the underlying kernel is a Matern kernel with smoothness parameter $\nu$, then it is known that $\kappa = d/(2\nu + d)$~\cite{Vakili2020infogain}. Consequently, the regret incurred by CEPE is given as $\cO(T^{\frac{2\nu + d}{3\nu  + d}})$. In the case of a Squared Exponential kernel, $\kappa \to 0$ and hence $R_{\text{CEPE}}(T) = \cO(T^{2/3} (\log (T))^{d/6})$. \\


\begin{proof}
We here provide a sketch of the proof. We bound the regret in the exploration and exploitation epochs separately. The bound on the RKHS norm of $f_i$ implies a bound on its sup norm. Thus, the regret in the exploration epoch is simply bounded by $\Oc(KN_T)$. The regret during the exploitation epoch is bounded using Lemma~\ref{lemma:concentration_bound} based on the posterior variance of the GP model. Maximal uncertainty reduction sampling and a known bound on the total uncertainty (cumulative conditional variances) of a GP model based on information gain allows us to bound the posterior variance with an $\Oc(\frac{\gamma_{N_t}}{N_t})$ term. Combining these two results, we bound the regret in the exploitation epoch. A detailed proof of Theorem~\ref{thm:regret_fed_exp} is provided in Appendix~\ref{proof:regret_fed_exp}.
\end{proof}

\begin{remark}\label{remark:singlekernel}
The upper bound on the regret of CEPE given in Theorem~\ref{thm:regret_fed_exp} is sublinear in $T$ as $\gamma_T=o(T)$ for typical kernels~\cite{Vakili2020infogain}. A sublinear regret bound guarantees the convergence to the maximum personalized reward $f_i(x_i^*)$ across all clients, as $T$ grows. We would like to emphasize the significance of this guarantee for the performance of CEPE. In comparison, it is not clear whether standard algorithms such as GP-UCB or GP-TS achieve sublinear regret, even in the simpler problem of a single client. The existing upper bounds on the regret performance of these algorithms are in the form of $\tilde{\mathcal{O}}(\gamma_T\sqrt{T})$, which may be trivial, as $\gamma_T$ may grow faster that $\sqrt{T}$. For example, that is the case with a broad range of parameters in the case of Mat{\'e}rn kernel (see~\cite{vakili2021open} for a detailed discussion). The CEPE algorithm thus may be of interest even for a single client setting, in terms of introducing a simple algorithm with sublinear regret. We, however, note that more sophisticated algorithms such as GP-ThreDS~\cite{Salgia2020GPThreDS}, SupKernelUCB~\cite{Valko2013a}, and BPE~\cite{li2022gaussian} achieve better regret bounds in the case of a single client.
\end{remark}


We would like to emphasize the role of the sequence $\{N_t\}_{t \in \N}$ in the context of the performance of CEPE, both in terms of regret and communication cost. It affects both these metrics through the overall amount of exploration carried out in CEPE. In particular, at the end of the time horizon, CEPE would have explored for no more than $N_{T}$ time steps. The impact of this on the regret can be noted from the statement of Theorem~\ref{thm:regret_fed_exp}. The first term, $\mathcal{O}(KN_T)$, corresponds to the regret incurred during the exploration sequence, and increases with $N_T$. The second term, $\mathcal{O}\left(KT\sqrt{\frac{\gamma_{N_T}}{N_T}\log(\frac{1}{\delta_0})}\right)$, corresponds to the regret incurred during exploitation, and decreases with $N_T$. That is, a larger exploration sequence enables a better approximation of the arm statistics, which leads to a better performance during the exploitation sequence. On the other hand, since CEPE communicates at all and only the exploration time instants, its communication cost is bounded by $N_{T}$. Thus, CEPE controls the communication-regret trade-off through this sequence $N_t$ and one can appropriately choose the sequence to trade-off the regret against communication based on the demands of a particular application. For example, if the objective is to minimize regret, then the optimal size $N_T$ is obtained by minimizing the larger of the two terms in expression of regret, yielding us $N_T=\Theta(T^{\frac{2}{3- \kappa}}(\log(1/\delta_0))^{\frac{1}{3}})$, as referred to in Theorem~\ref{thm:regret_fed_exp}. Under this scenario, CEPE incurs a communication cost of $\cO(T^{2/(3- \kappa)})$, which is sublinear in $T$.

\section{Reducing Communication Cost via Sparse Approximation}\label{section:sparsepolicy}


In this section, we develop a variant of the CEPE algorithm that leverages sparse approximations of the posterior GP models to achieve significant reduction in communication cost while preserving the optimal regret performance.We begin with some preliminaries on sparse approximations of GP followed by the description of our proposed algorithm.

\subsection{Sparse Approximation of GP models}

The sparse approximations of GP models are designed to approximate the posterior mean and variance obtained from the GP model, using a subset of query points to reduce the computational cost associated with evaluating the exact expressions. Let $\bfz_t = \{z_1, z_2, \dots, z_m\} \subset \bfx_t$ be a subset of the query points, which we aim to use in approximating the posterior distribution of the GP model. These points are often referred to as the \emph{inducing points}. The Nystr{\"o}m approximations are given as follows~\citep{Wild2021}:
\begin{align}
    \tilde{\mu}_t (x) & = k_{\bfz_t, x}^{\top} (\lambda K_{\bfz_t, \bfz_t} +  K_{\bfz_t, \bfx_t} K_{\bfx_t, \bfz_t})^{-1} K_{\bfz_t, \bfx_t} \bfy_t  \label{eqn:approx_posterior_mean}\\
    \tilde{\sigma}^2_t (x) & = \frac{1}{\lambda} \big( k(x, x) - k_{\bfz_t, x}^{\top}  K_{\bfz_t, \bfz_t}^{-1}  k_{\bfz_t, x} +  \nonumber \\
    & \ \ \ \  \   k_{\bfz_t, x}^{\top}( K_{\bfz_t, \bfz_t} +  \lambda^{-1} K_{\bfz_t, \bfx_t} K_{\bfx_t, \bfz_t})^{-1} k_{\bfz_t, x}  
    \label{eqn:approx_posterior_variance}
\end{align}
where $K_{\bfz_t, \bfz_t} = [k(z_i, z_j)]_{i,j = 1}^m \in \R^{m \times m}$, $K_{\bfz_t, \bfx_t} = [k(z_i, x_j)]_{i=1, j= 1}^{m,t} \in \R^{m \times t}$ and $K_{\bfx_t, \bfz_t} = K_{\bfz_t, \bfx_t}^{\top}$.

\subsection{CEPE with Sparse Approximation}
\label{sub:sparse_fed_ex2}

Referred to as S-CEPE, this variant of CEPE 
adopts a simpler design of interleaved exploration-exploitation phases. It consists of a single exploration phase of length $N_T$, entirely concentrated at the beginning of the time horizon, followed by an exploitation phase till the end of the time horizon. Such a design provides the clients with the entire exploration sequence which is required to construct the set of inducing points in Nystr\"om approximation.  \\

Unlike CEPE, the communication in S-CEPE is completely carried out in the exploitation phase. In particular, all the communication is carried during the first $N_T^{(c)}$ steps of the exploitation phase, which are referred to the communication phase. At the $s^{\text{th}}$ instant during this communication phase, client $i$ sends $(z_{i, s}, \bfw_{i, s})$ to the server, where $z_{i,s}$ denotes the $s^{\text{th}}$ inducing point and $\bfw_{i, s}\in \R$ denotes the $s^{\text{th}}$ element of the vector $\bfw_i$. The construction of the inducing points and the vector $\bfw_i$ is described below. The server then broadcasts this to all the clients. The length of the communication sequence, $N_T^{(c)}$, is set to $9(1 + 1/\lambda) q_0 \gamma_{N_T}$ to allow all clients to complete their communication with high probability (See Lemma~\ref{lemma:inducing_set_size}), where $q_0$ is a constant specified later. \\

The messages communicated during S-CEPE are determined using the sparse approximation of GP models. Specifically, at the end of the exploration epoch, each client $i$ constructs its set of inducing points $\bfz_{i, N_T} = (z_{i, 1}, z_{i, 2}, \dots, z_{i, M_i})$ by including each point $x_{i,j} \in \bfx_{i,N_T}$ queried during the exploration epoch with a probability equal to $q_0 \left[\sigma^{(h_i)}_{N_T}(x_{i,j})\right]^2$, independent of other points. Here $q_0$ is a constant that is used to appropriately scale the probabilities of inclusion to ensure a sufficiently large set for a good approximation~\citep{Calandriello2019} and $M_i$ denotes the random cardinality of the set of inducing points of client $i$. Each client $i$, using their set of inducing points, then proceeds to compute the vector $\bfw_i = (\lambda K_{\bfz_{i, N_T}, \bfz_{i, N_T}} +  K_{\bfz_{i, N_T}, \bfx_{i, N_T}} K_{\bfx_{i, N_T}, \bfz_{i, N_T}})^{-1} K_{\bfz_{i, N_T}, \bfx_{i, N_T}} \bfy_{i, N_T}$.  \\

The query policy used for S-CEPE during the exploration phase is same as that for CEPE, that is, to query points based on maximum uncertainty. During the exploitation phase of S-CEPE, each client $i$ chooses to query the maximizer of the predictive mean of their \emph{local} observation function, $\argmax_{x\in\cX} \mu_{N_T}^{(h_i)}(x)$ for the communication phase. After the communication phase, the clients query the maximizer of the Nystr\"om approximate posterior mean of their personalized reward function for the rest of the epoch. The approximate posterior mean of $f_i$ is given as
$$\tilde{\mu}_{N_T}^{(f_i)}(\cdot) = \alpha_i \tilde{\mu}_{N_T}^{(h_i)}(\cdot) + \frac{(1 - \alpha_i)}{K} \sum_{j = 1}^K \tilde{\mu}_{N_T}^{(h_j)}(\cdot)$$ with $\tilde{\mu}_{N_T}^{(h_i)}$'s as defined in~\eqref{eqn:approx_posterior_mean}. 
A pseudocode for S-CEPE is provided in Algorithm~\ref{alg:sparse_fed_exp2}.
 
\subsection{Performance Analysis}

We first establish a bound on the length of the communication sequence. The following lemma, which is a slightly modified version of Theorem~$1$ in~\cite{Calandriello2019}, gives us an upper bound on the largest inducing set among all the clients.

\begin{lemma}\label{lemma:inducing_set_size}
Fix $\delta \in (0,1)$, $\varepsilon \in (0, 1)$ and let $\chi = \frac{1 + \varepsilon}{1- \varepsilon}$. If the set of inducing points is constructed as outlined in Section~\ref{sub:sparse_fed_ex2} with $q_0 \geq 6\chi \log(4TK/\delta)/\varepsilon^2$, then with probability at least $1 - \delta$, $\max_{i} M_i \leq 9(1 + 1/\lambda) q_0 \gamma_{N_T}$.
\end{lemma}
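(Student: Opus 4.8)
The plan is to recognize the inducing-set construction as an instance of exact $\lambda$-ridge leverage score (RLS) sampling and then invoke the dictionary-size concentration bound of~\cite{Calandriello2019}, the ``slight modification'' being to re-express the effective-dimension bound in terms of the maximal information gain $\gamma_{N_T}$ and to union bound over the $K$ clients. Fix a client $i$ and write $M_i = \sum_{j=1}^{N_T}\1[x_{i,j}\in\bfz_{i,N_T}]$, a sum of independent Bernoulli variables with inclusion probabilities $p_{i,j}=\min\{1,\,q_0[\sigma^{(h_i)}_{N_T}(x_{i,j})]^2\}$. Writing $K$ for the Gram matrix $K_{\bfx_{i,N_T},\bfx_{i,N_T}}$ of client $i$'s exploration points, the first observation is that $[\sigma^{(h_i)}_{N_T}(x_{i,j})]^2 = \lambda\,\tau_{i,j}$, where $\tau_{i,j}=[K(K+\lambda I)^{-1}]_{jj}$ is the $\lambda$-ridge leverage score of the $j$-th point; this follows from~\eqref{eq:posterior_variance} by substituting $k_{\bfx_{i,N_T},x_{i,j}} = K e_j$ and using $I-(K+\lambda I)^{-1}K=\lambda(K+\lambda I)^{-1}$. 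Hence each point is retained with probability proportional to its exact leverage score, so the construction is precisely oversampled RLS sampling with oversampling factor $q_0\lambda$, placing us squarely in the setting of Theorem~1 of~\cite{Calandriello2019}.

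Next I would bound the expected size. Summing gives $\E[M_i]\le q_0\sum_{j=1}^{N_T}[\sigma^{(h_i)}_{N_T}(x_{i,j})]^2 = q_0\lambda\sum_{j}\tau_{i,j}=q_0\lambda\, d_{\mathrm{eff}}$, where $d_{\mathrm{eff}}=\mathrm{tr}(K(K+\lambda I)^{-1})$ is the effective dimension. The step linking this to $\gamma_{N_T}$ is monotonicity of the GP posterior variance: since $\sigma^{(h_i)}_{N_T}$ conditions on all $N_T$ exploration points, a superset of the first $j-1$, one has $[\sigma^{(h_i)}_{N_T}(x_{i,j})]^2\le[\sigma^{(h_i)}_{j-1}(x_{i,j})]^2$, and therefore $\sum_{j=1}^{N_T}[\sigma^{(h_i)}_{N_T}(x_{i,j})]^2\le\sum_{j=1}^{N_T}[\sigma^{(h_i)}_{j-1}(x_{i,j})]^2\le\gamma_{N_T}$, the last inequality being the definition of $\gamma_{N_T}$ evaluated on the (arbitrary) exploration sequence. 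This yields $\E[M_i]\le q_0\lambda\,d_{\mathrm{eff}}\le q_0\gamma_{N_T}$.

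Finally I would close with concentration and a union bound. As $M_i$ is a sum of independent Bernoullis, a multiplicative Chernoff (or Bernstein) bound controls its upper tail; because $q_0\ge 6\chi\log(4TK/\delta)/\varepsilon^2$, the threshold $9(1+1/\lambda)q_0\gamma_{N_T}$ sits a fixed multiplicative factor (at least $9$) above $\E[M_i]$, so the deviation probability is at most $\delta/K$ for each client, and union-bounding over the $K$ clients gives $\max_i M_i\le 9(1+1/\lambda)q_0\gamma_{N_T}$ with probability at least $1-\delta$. The factor $(1+1/\lambda)$ is exactly the one appearing in Calandriello's argument, where it enters through the bound $\tau_{i,j}\le k(x_{i,j},x_{i,j})/\lambda\le 1/\lambda$ (using the normalization $k(x,x)\le 1$) when converting the tail estimate into the stated deterministic bound.

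I expect the concentration step to be the main obstacle, specifically the bookkeeping of constants: reproducing the precise $9(1+1/\lambda)$ factor requires following Calandriello's variance-based Chernoff computation rather than the crude multiplicative version sketched above, and one must ensure the conclusion is phrased through the known, computable $\gamma_{N_T}$ instead of $d_{\mathrm{eff}}$. The identities relating posterior variance, leverage scores, effective dimension, and information gain are routine once the $\sigma^2=\lambda\tau$ correspondence is in hand; the delicate part is matching the tail probability to the prescribed $q_0$ so that the union bound over all $K$ clients (and, for the companion approximation guarantee, over all $T$ rounds, which is why $q_0$ carries a $\log(4TK/\delta)$ factor) closes at level $\delta$.
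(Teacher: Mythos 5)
Your proposal is correct and follows essentially the same route as the paper: the paper's proof likewise treats the construction as ridge-leverage-score sampling, invokes (a slightly modified) Theorem~1 of \cite{Calandriello2019} per client, converts the effective dimension to $\gamma_{N_T}$ via $\sum_j [\sigma^{(h_i)}_{N_T}(x_{i,j})]^2 \le \sum_j [\sigma^{(h_i)}_{j-1}(x_{i,j})]^2 \le \gamma_{N_T}$, and union-bounds over the $K$ clients, which is exactly why $q_0$ carries the $\log(4TK/\delta)$ factor. Your identification of $[\sigma^{(h_i)}_{N_T}(x_{i,j})]^2=\lambda\tau_{i,j}$ and your honest caveat that the exact $9(1+1/\lambda)$ constant must be traced through Calandriello's variance-based Chernoff computation are both consistent with the paper's argument.
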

We can conclude from the above lemma that choice of $N_T^{(c)}$ used in S-CEPE allows for sufficient time for all the clients to transmit their messages to the server in the communication phase. As a result, the communication cost of S-CEPE can be bounded by $N_T^{(c)} = \cO(\gamma_{N_T}) = \cO(N_T^{\kappa})$. The communication cost of S-CEPE is significantly lower than that of CEPE. For example in the case of SE kernel with the optimal choice of $N_T = \cO(KT^{\frac{2}{3}}(\log({T/\delta_0}))^{(d+2)/6})$, the communication cost of S-CEPE is bounded by $\Oc(K\log^{d+1}(T))$, while the communication cost of CEPE is in $\cO(KT^{\frac{2}{3}}(\log({T/\delta_0}))^{(d+2)/6})$. Furthermore, S-CEPE achieves this significant reduction in communication cost while maintaining the regret guarantees of CEPE, as shown in the following theorem.

\begin{theorem}
Consider the kernel-based collaborative learning with personalized rewards as described in Section~\ref{sub:fl_personalization}.
Under Assumptions~\ref{assumption1} and~\ref{assumption:sub_gaussian}, the regret performance of S-CEPE satisfies
\begin{align*}
R_{\text{S-CEPE}}(T) = \cO\left( K N_T + K T\sqrt{\frac{\gamma_{N_T}}{N_T} \log \left( T/\delta_0  \right)} \right).
\end{align*}
with probability at least $1-\delta_0$ for any $\delta_0 \in (0,1)$. 
\label{thm:sparse_fed_exp2}
\end{theorem}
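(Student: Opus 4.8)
The plan is to mirror the three-part regret decomposition behind CEPE (Theorem~\ref{thm:regret_fed_exp}), replacing every appeal to the exact GP confidence interval of Lemma~\ref{lemma:concentration_bound} by its Nyström counterpart. I split the horizon into the initial exploration phase of length $N_T$, the communication phase of length $N_T^{(c)} = \cO(\gamma_{N_T})$, and the remaining main exploitation phase. The first two phases are handled crudely: Assumption~\ref{assumption1} together with boundedness of $k(x,x)$ gives $\|f_i\|_\infty \le \|f_i\|_{H_k}\sup_{x}\sqrt{k(x,x)} = \cO(1)$, so each per-step regret is $\cO(1)$ and the contribution of these two phases is $\cO(KN_T + KN_T^{(c)}) = \cO(KN_T)$, where the last step uses $\gamma_{N_T} = \cO(N_T^{\kappa}) = o(N_T)$. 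The substance of the proof lies in the main exploitation phase.

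In the main exploitation phase each client $i$ queries the fixed point $\tilde{x}_i = \argmax_{x} \tilde{\mu}_{N_T}^{(f_i)}(x)$. Using the optimality of $\tilde{x}_i$ for $\tilde{\mu}_{N_T}^{(f_i)}$ in the standard decomposition,
\begin{align*}
f_i(x_i^*) - f_i(\tilde{x}_i) \le |f_i(x_i^*) - \tilde{\mu}_{N_T}^{(f_i)}(x_i^*)| + |\tilde{\mu}_{N_T}^{(f_i)}(\tilde{x}_i) - f_i(\tilde{x}_i)|.
\end{align*}
By linearity $\tilde{\mu}_{N_T}^{(f_i)} = \alpha_i \tilde{\mu}_{N_T}^{(h_i)} + \tfrac{1-\alpha_i}{K}\sum_{j}\tilde{\mu}_{N_T}^{(h_j)}$, so the triangle inequality reduces both terms to a convex combination of the per-function errors $|h_j(x) - \tilde{\mu}_{N_T}^{(h_j)}(x)|$. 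I therefore need a uniform (over $\cX$) high-probability bound of the form $|h_j(x) - \tilde{\mu}_{N_T}^{(h_j)}(x)| \le \tilde{\beta}\,\tilde{\sigma}_{N_T}^{(h_j)}(x)$, the Nyström analogue of Lemma~\ref{lemma:concentration_bound}, obtained via the discretization of Assumption~\ref{assumption:discretization} and a union bound over $\cD_n$ with $n$ polynomial in $T$, which injects the $\log(T/\delta_0)$ factor into $\tilde{\beta}^2$.

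The crux is to show both that the Nyström mean admits such a confidence interval and that the approximate variance is controllably small. Since the inducing points are sampled with probabilities $q_0[\sigma_{N_T}^{(h_i)}]^2$ (an approximate ridge-leverage-score rule) and the posterior variance depends only on query locations and not on the noise, the inducing set is independent of the noise sequence; the spectral approximation of~\cite{Calandriello2019} underlying Lemma~\ref{lemma:inducing_set_size} then yields, with high probability, $\tilde{\sigma}_{N_T}^{(h_j)}(x) \le \sqrt{1+\varepsilon}\,\sigma_{N_T}^{(h_j)}(x)$ for all $x$ together with a valid $\tilde{\beta} = \cO(\beta(B_j,\delta))$. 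It remains to bound the exact variance: because the exploration phase uses maximum-variance sampling $x_{i,t} = \argmax_{x}\sigma_{t-1}^{(h_i)}(x)$ and the posterior variance is non-increasing in the observations, the standard argument gives $N_T \max_{x}[\sigma_{N_T}^{(h_i)}(x)]^2 \le \sum_{t\le N_T}[\sigma_{t-1}^{(h_i)}(x_{i,t})]^2 = \cO(\gamma_{N_T})$, hence $\max_{x}\sigma_{N_T}^{(h_i)}(x) = \cO(\sqrt{\gamma_{N_T}/N_T})$. Combining, every per-step regret in the main phase is $\cO(\tilde{\beta}\sqrt{\gamma_{N_T}/N_T})$; summing over the at most $T$ steps and $K$ clients gives $\cO(KT\sqrt{(\gamma_{N_T}/N_T)\log(T/\delta_0)})$, which added to the $\cO(KN_T)$ from the first two phases yields the claim.

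I expect the main obstacle to be the Nyström confidence interval: transferring the spectral approximation of~\cite{Calandriello2019} into a pointwise prediction guarantee for $\tilde{\mu}_{N_T}^{(h_j)}$ while simultaneously controlling $\tilde{\sigma}_{N_T}^{(h_j)}$ against $\sigma_{N_T}^{(h_j)}$, and ensuring the confidence width $\tilde{\beta}$ degrades by at most a constant factor so that the regret order is preserved. Care is also needed so that the event of Lemma~\ref{lemma:inducing_set_size} (bounding $\max_i M_i$, guaranteeing the communication phase is long enough), the spectral-approximation event, and the union-bounded confidence events all hold simultaneously with total failure probability at most $\delta_0$.
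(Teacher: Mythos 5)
Your proposal follows essentially the same route as the paper: the same exploration/communication/exploitation three-phase decomposition with the first two phases bounded crudely by their lengths, and the exploitation regret controlled by the Nystr\"om confidence interval (the paper's Lemma~\ref{lemma:concentration_bound_sparse}) combined with the maximum-variance-sampling argument giving $\max_x \sigma_{N_T}(x) = \cO(\sqrt{\gamma_{N_T}/N_T})$. The obstacles you flag (transferring the spectral approximation of \cite{Calandriello2019} into a pointwise guarantee with only a constant-factor inflation of $\tilde{\beta}$, and intersecting the various high-probability events) are exactly what the paper's appendix resolves, so no further comparison is needed.
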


\begin{proof}
The proof of this theorem is similar to that of Theorem~\ref{thm:regret_fed_exp} and we provide a sketch of the proof below. We bound the regret in exploration, communication and the exploitation phases separately. The regret during the exploration phase and the communication phase is bounded within a constant factor of their length, i.e., $\cO(KN_T)$ and $\cO(K\gamma_{N_T})$ respectively.
The exploitation regret is bounded following similar steps as in the proof of~\ref{thm:regret_fed_exp}, except that we bound the instantaneous regret based on approximate posterior standard deviation in contrast to the exact one using Lemma~\ref{lemma:concentration_bound_sparse}, described below. 
\end{proof}

The following lemma is a counterpart to Lemma~\ref{lemma:concentration_bound}, in the case of sparse approximation of the posterior mean and variance.


\begin{algorithm}[!h]
	\caption{S-CEPE}
	\label{alg:sparse_fed_exp2}
	\begin{algorithmic}
		\STATE {\bfseries Input:} $N_T$, the kernel $k(\cdot, \cdot)$, $\varepsilon \in (0, 1)$, $\delta_0 \in (0,1)$, $\lambda$.
		\STATE $\bfz_{i, N_T} \leftarrow \{ \}$, $q_0 \leftarrow 6(1 + \varepsilon) \log(8TK/\delta_0)/\varepsilon^2(1 - \varepsilon)$,
		$N_T^{(c)} \leftarrow 9(1 + 1/\lambda) q_0 \gamma_{N_T}$
		\FOR{$t = 1,2, \dots, N_T$}
		\STATE Choose $x_{i,t} = \argmax_{x \in \cX} \sigma_{t-1}^{(h_i)}(x)$
		\STATE Query the function $h_i$ at $x_{i,t}$ to obtain $y_{i,t}$
		\ENDFOR
		\FOR{$j = 1,2, \dots, N_T$}
		\STATE Draw $W_j \sim \text{Bern}(p_j)$, where $p_j := q_0 [\sigma^{(h_i)}_{N_T}(x_{i, j})]^2$
		\IF{$W_j = 1$}
		\STATE $\bfz_{i, N_T} \leftarrow \bfz_{i, N_T} \cup \{ x_{i, j} \}$
		\ENDIF
		\ENDFOR
		\STATE Evaluate the vector $\bfw_i = (\lambda K_{\bfz_{i, N_T}, \bfz_{i, N_T}} +  K_{\bfz_{i, N_T}, \bfx_{i, N_T}} K_{\bfx_{i, N_T}, \bfz_{i, N_T}})^{-1} K_{\bfz_{i, N_T}, \bfx_{i, N_T}} \bfy_{i, N_T}$
		\FOR{$s = 1, 2, \dots, N_T^{(c)}$}
		\STATE $t \leftarrow t + 1$
		\STATE $x_{i, t} \leftarrow \argmax_{x \in \cX} {\mu}_{N_T}^{(h_i)}(x)$
		\IF{$s \leq m_i$}
		\STATE Send $(z_{i, s} , \bfw_{i, s})$ to the server which is then broadcast to all clients
		\ENDIF
		\ENDFOR
		\REPEAT 
		\STATE $t \leftarrow t + 1$
		\STATE  $x_{i, t} \leftarrow \argmax_{x \in \cX} \tilde{\mu}_{N_T}^{(f_i)}(x)$ 
		\UNTIL{$t = T$}
	\end{algorithmic}
\end{algorithm}

\begin{lemma}
Under Assumptions~\ref{assumption1} and~\ref{assumption:sub_gaussian}, provided a set of observations $\cH_t = \{\bfx_t, \bfy_t\}$, with $\bfx_t$ chosen independently of the associated noise sequence, and corresponding subset of inducing points $\bfz_t$ obtained as outlined in Section~\ref{sub:sparse_fed_ex2},  we have, for a fixed $x\in\cX$, with probability at least $1-\delta$,
\begin{align*}
    |h(x)-\tilde{\mu}_t(x)| \leq \tilde{\beta}(B, \delta) \tilde{\sigma}_t(x),
\end{align*}
where $\tilde{\mu}(x)$ and $\tilde{\sigma}(x)$ are as defined in~\eqref{eqn:approx_posterior_variance} and $\tilde{\beta}(B, \delta)=B\sqrt{2\lambda/(1 - \varepsilon) } + R \sqrt{2 \log ( T/\delta )}$.
\label{lemma:concentration_bound_sparse}
\end{lemma}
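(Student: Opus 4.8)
The plan is to replicate the bias--noise decomposition behind Lemma~\ref{lemma:concentration_bound}, but carried out with the Nystr\"om weights and with every quantity controlled by the approximate posterior variance $\tilde\sigma_t(x)$ of~\eqref{eqn:approx_posterior_variance} rather than the exact one. First I would note that the approximate posterior mean~\eqref{eqn:approx_posterior_mean} is linear in the observations: writing $M = \lambda K_{\bfz_t,\bfz_t} + K_{\bfz_t,\bfx_t}K_{\bfx_t,\bfz_t}$ and $\mathbf{a}_t(x) = K_{\bfx_t,\bfz_t}M^{-1}k_{\bfz_t,x}$, we have $\tilde\mu_t(x) = \mathbf{a}_t(x)^\top \bfy_t$. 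Decomposing $\bfy_t = \mathbf{h}_t + \boldsymbol{\epsilon}_t$ into the noiseless evaluations $\mathbf{h}_t = (h(x_1),\dots,h(x_t))^\top$ and the noise vector $\boldsymbol{\epsilon}_t$, the error separates as
\[
h(x) - \tilde\mu_t(x) = \big(h(x) - \mathbf{a}_t(x)^\top \mathbf{h}_t\big) - \mathbf{a}_t(x)^\top \boldsymbol{\epsilon}_t ,
\]
a deterministic bias term and a stochastic noise term that I would bound separately.

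For the noise term the crucial point is independence: maximum-variance sampling is non-adaptive, so the exploration queries $\bfx_t$ are determined by the kernel and domain alone, and the inducing set $\bfz_t$ depends only on $\bfx_t$ (through the posterior variances used as sampling probabilities) and on external Bernoulli randomness. Hence $\mathbf{a}_t(x)$ is independent of $\boldsymbol{\epsilon}_t$, and conditionally on $(\bfx_t,\bfz_t)$ the term $\mathbf{a}_t(x)^\top\boldsymbol{\epsilon}_t$ is $R\|\mathbf{a}_t(x)\|_2$-sub-Gaussian by Assumption~\ref{assumption:sub_gaussian}. I would then establish the algebraic bound $\|\mathbf{a}_t(x)\|_2^2 \le \tilde\sigma_t^2(x)$: using $K_{\bfz_t,\bfx_t}K_{\bfx_t,\bfz_t} = M - \lambda K_{\bfz_t,\bfz_t}$ gives $\|\mathbf{a}_t(x)\|_2^2 = k_{\bfz_t,x}^\top M^{-1}k_{\bfz_t,x} - \lambda\, k_{\bfz_t,x}^\top M^{-1} K_{\bfz_t,\bfz_t} M^{-1} k_{\bfz_t,x} \le k_{\bfz_t,x}^\top M^{-1}k_{\bfz_t,x}$, while rewriting~\eqref{eqn:approx_posterior_variance} via $M = \lambda(K_{\bfz_t,\bfz_t} + \lambda^{-1}K_{\bfz_t,\bfx_t}K_{\bfx_t,\bfz_t})$ shows $\tilde\sigma_t^2(x) = \tfrac{1}{\lambda}\|(I-P)\phi(x)\|^2 + k_{\bfz_t,x}^\top M^{-1}k_{\bfz_t,x}$, where $\phi$ is the feature map and $P$ the RKHS projection onto the inducing features; since the first summand is nonnegative the claim follows. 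A sub-Gaussian tail bound then controls the noise term by $R\,\tilde\sigma_t(x)\sqrt{2\log(2/\delta)}$, matching (after the union bound below) the second summand of $\tilde\beta$.

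For the bias term I would invoke the reproducing property to write $h(x) - \mathbf{a}_t(x)^\top\mathbf{h}_t = \langle h, G\phi(x)\rangle_{H_k}$ with $G = I - A\Phi_z M^{-1}\Phi_z^\top$, where $\Phi_z,\Phi_x$ collect the inducing and query features and $A = \Phi_x\Phi_x^\top$ is the empirical covariance operator, and then bound it by $B\|G\phi(x)\|_{H_k}$ via Cauchy--Schwarz and Assumption~\ref{assumption1}. Splitting $G\phi(x) = (I-P)\phi(x) + \big(P - A\Phi_z M^{-1}\Phi_z^\top\big)\phi(x)$, the projection residual is immediately controlled, $\|(I-P)\phi(x)\|^2 \le \lambda\tilde\sigma_t^2(x)$, again from the variance formula above. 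The remaining in-span piece is the crux: here I would use the $\varepsilon$-accurate ridge-leverage-score guarantee underlying Lemma~\ref{lemma:inducing_set_size} (Theorem~1 of~\cite{Calandriello2019}), which ensures the inducing features capture the range of $A$ up to a $\lambda$-regularized $\varepsilon$ error and thereby produces the factor $1/(1-\varepsilon)$, yielding $\|G\phi(x)\|^2 \le \tfrac{2\lambda}{1-\varepsilon}\tilde\sigma_t^2(x)$ and hence the $B\sqrt{2\lambda/(1-\varepsilon)}$ coefficient of $\tilde\beta$.

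Finally I would combine the two bounds on the intersection of the sub-Gaussian concentration event with the good event that the inducing set is an $\varepsilon$-accurate dictionary; the probability of the latter is what introduces the $TK$ inside the logarithm through the choice of $q_0$, so that a single union bound delivers the stated $\log(T/\delta)$ dependence. I expect the main obstacle to be the in-span bias bound: unlike the clean exact identity $I - A(A+\lambda I)^{-1} = \lambda(A+\lambda I)^{-1}$ available in the non-sparse case, here one must show that replacing $(A+\lambda I)^{-1}$ by the sketched inverse $\Phi_z M^{-1}\Phi_z^\top$ perturbs the regularized residual by no more than the $\varepsilon$-factor, and matching the resulting operator bound to $\tilde\sigma_t^2(x)$ with the correct constants is the delicate step.
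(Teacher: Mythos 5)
Your proposal matches the paper's proof in all essentials: the same bias--noise split of $h(x)-\tilde{\mu}_t(x)$ through the weight vector $\mathbf{a}_t(x)=K_{\bfx_t,\bfz_t}M^{-1}k_{\bfz_t,x}$, the same algebraic chain $\|\mathbf{a}_t(x)\|_2^2\le k_{\bfz_t,x}^{\top}M^{-1}k_{\bfz_t,x}\le\tilde{\sigma}_t^2(x)$ feeding the conditional sub-Gaussian tail for the noise (valid because the exploration queries and the Bernoulli sampling of $\bfz_t$ are independent of the noise), and the same control of the bias via the $\varepsilon$-accuracy of the leverage-score dictionary, all on the intersection with the good event behind Lemma~\ref{lemma:inducing_set_size}. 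The step you flag as delicate does close: with your $S=\Phi_z^{\top}M^{-1}\Phi_z$, the identity $\Phi_z\bigl[(A+\lambda I)S\phi(x)-\phi(x)\bigr]=0$ yields $\phi(x)-AS\phi(x)=(I-P)\phi(x)+\lambda S\phi(x)-(I-P)AS\phi(x)$, where the first two terms are each bounded by $\sqrt{\lambda}\,\tilde{\sigma}_t(x)$ by pure algebra (using $\Phi_z\Phi_z^{\top}\preceq \lambda^{-1}M$ and your variance decomposition) and the last by $\sqrt{\varepsilon\lambda/(1-\varepsilon)}\,\tilde{\sigma}_t(x)$ using the dictionary guarantee $\|\Phi_x(I-P)\Phi_x^{\top}\|\le\varepsilon\lambda/(1-\varepsilon)$, which recovers $\tilde{\beta}$ up to the bookkeeping of absolute constants.
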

Detailed proofs of Theorem~\ref{thm:sparse_fed_exp2} and Lemma~\ref{lemma:concentration_bound_sparse} are provided in Appendix~\ref{proof:sparse_fed_exp2}.

\section{Regret Lower Bound}\label{section:lowerbound}

In this section, we establish a lower bound on regret incurred by any algorithm for the problem of kernel-based collaborative learning with personalized rewards. In particular, in the following theorem, we show that even when there is no communication constraint, the total regret incurred by any learning algorithm as defined in~\eqref{eqn:regret_definition} is at least $\Omega(T^{2/(3 - \kappa)})$, where $\kappa$ characterizes the rate of maximal information gain and depends on the underlying kernel.


\begin{theorem}\label{Theorem:LB}
Consider the kernel-based collaborative learning with personalized rewards setting described in Section~\ref{sub:fl_personalization}, 
in an RKHS corresponding to the family of Squared Exponential (SE) and Matern (with smoothness $\nu$) kernels with no constraints on communication. Then, for all sets of personalization parameters $\{\alpha_1, \alpha_2, \dots, \alpha_K\}$, there exists a choice of observation functions $\{h_1, h_2, \dots, h_K\}$ for which, the cumulative regret of any policy $\pi$, satisfies
\begin{align*}
    \E[R_{\pi}(T)] = \begin{cases}
    \Omega(\alpha_{*} T^{\frac{2\nu + d}{3\nu + d}}) & \text{ for Matern kernels}, \\ \Omega(\alpha_{*} T^{2/3} (\log T)^{d/6}) & \text{ for SE kernel}.
    \end{cases}
\end{align*}
for sufficiently large time horizon $T$. In the above expression, $\alpha_{*} = \max_{i}(\min\{\alpha_i, 1 - \alpha_i\}) > 0$ is a constant that depends only on the personalization parameters. 
\end{theorem}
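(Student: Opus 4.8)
The plan is to lower bound the total regret by the regret of a single, carefully chosen client and to exhibit an unavoidable exploration--exploitation tradeoff whose optimum sits at the claimed exponent. Fix $k^\star\in\argmax_i\min\{\alpha_i,1-\alpha_i\}$ and write $\alpha_*=\min\{\alpha_{k^\star},1-\alpha_{k^\star}\}$; since $R_{\pi}(T)\ge 0$ termwise, I may discard all clients except $k^\star$ and a set of \emph{helper} clients and still obtain a valid lower bound. The governing intuition is that, unlike the single-client kernel bandit where exploration and exploitation overlap (``learn while earning''), here the information that client $k^\star$ needs about the global function $g$ resides in the \emph{other} clients' observation functions, which $k^\star$ cannot query; acquiring it forces those clients off their own optima and therefore cannot be done for free. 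This is precisely what degrades the attainable exponent from the single-client rate to $\tfrac{2}{3-\kappa}$.

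The construction I would use is a two-scale hard instance. Each helper client $j$ is given a large, $\Theta(1)$-height bump in $h_j$ at a fixed, known location $p_j$, so that $x_j^\star=p_j$ is trivially exploitable and any query $j$ makes elsewhere is $\Omega(1)$-suboptimal for it. Superimposed on the helpers' functions, so that it appears in $g$ undiluted (coefficient $\Theta(1)$ rather than $1/K$, by replicating it across the $K-1$ helpers), is a small, $\epsilon$-scale perturbation, supported away from all $p_j$, whose configuration determines $\argmax_x f_{k^\star}(x)$. Because $h_{k^\star}$ itself does not carry this perturbation, client $k^\star$ learns about it \emph{only} through the helpers' shared observations; and because the perturbation is at scale $\epsilon\ll 1$ while the helpers' own bumps are at scale $1$, no helper has a selfish incentive to resolve it. The weight of the global part in $f_{k^\star}$ is $1-\alpha_{k^\star}\ge\alpha_*$, so suboptimality in locating the perturbation costs $k^\star$ a reward gap proportional to $\alpha_*$ times the estimation error; tracking these weights across the regimes $\alpha_{k^\star}\le 1/2$ and $\alpha_{k^\star}\ge 1/2$ (and optimizing the assignment of victim and helper roles) is what produces the stated $\alpha_*$ factor.

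The analysis then rests on two quantitative claims, both proved by information-theoretic change-of-measure arguments. First, an \emph{estimation lower bound}: from any $N$ noisy observations---however adaptively allocated across clients---no procedure can pin down $g$, and hence $\argmax f_{k^\star}$, to accuracy better than $\Omega(\sqrt{\gamma_N/N})$. This is the step that injects the correct kernel dependence: one builds a packing of roughly $\gamma_N$ mutually near-orthogonal bump perturbations, each of height $\Theta(\sqrt{\gamma_N/N})$, and applies Fano/Assouad, so that the number of indistinguishable directions is controlled by the maximal information gain. Consequently, at every exploitation step at which only $N$ informative queries have been gathered, client $k^\star$ incurs instantaneous regret $\Omega(\alpha_*\sqrt{\gamma_N/N})$. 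Second, an \emph{exploration-cost} claim: every query informative about the perturbation is, by construction, made by a helper away from its $p_j$ and hence carries $\Omega(1)$ instantaneous regret, so $N$ informative queries cost $\Omega(N)$ in aggregate. Decomposing the horizon into the (at most $N$) informative queries and the remaining $\Omega(T)$ exploitation steps gives
\begin{align*}
\E[R_{\pi}(T)] \;=\; \Omega\!\left( \min_{N}\Big(\, N + \alpha_* T\sqrt{\gamma_N/N}\,\Big)\right).
\end{align*}
Substituting $\gamma_N=\Theta(N^\kappa)$ and optimizing yields $N=\Theta((\alpha_* T)^{2/(3-\kappa)})$ and the bound $\Omega(\alpha_* T^{2/(3-\kappa)})$; specializing $\kappa=d/(2\nu+d)$ gives the Mat\'ern exponent $\frac{2\nu+d}{3\nu+d}$, and feeding the SE information gain $\gamma_N=\Theta((\log N)^d)$ into the same optimization produces the $T^{2/3}(\log T)^{d/6}$ rate.

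The main obstacle is the first quantitative claim together with its interaction with the second. It is tempting to hide a single bump and invoke Fano to argue it is hard to localize, but that argument only charges $\Theta(\epsilon^{-2}\log M)$ informative queries and collapses to a $T^{2/3}$ bound with no genuine $\kappa$-dependence. Recovering the sharper $T^{2/(3-\kappa)}$ rate requires forcing the algorithm to resolve $g$ across roughly $\gamma_N$ independent directions simultaneously---so that the relevant query count scales with the information gain $\gamma_N$ (the number of directions that must be resolved) rather than with its logarithm---while \emph{simultaneously} guaranteeing that each such query is $\Omega(1)$-costly because it is necessarily made by a client off its own optimum. Making these two requirements compatible within one instance (the correct packing geometry for the perturbation, disjointness of its support from the helpers' bumps, and the weight bookkeeping that delivers exactly $\alpha_*$) is the technically delicate core of the proof.
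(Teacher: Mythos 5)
Your overall architecture is the right one and matches the structure that the matching upper bound forces on any proof of this theorem: reduce to a tension between (i) an estimation error of order $\sqrt{\gamma_N/N}$ after $N$ informative samples, obtained from a packing of order $\gamma_N$ near-orthogonal bumps in the RKHS together with a change-of-measure argument, and (ii) a per-sample acquisition cost bounded away from the $O(\epsilon)$ cost available in the centralized problem, because the informative queries must be made by clients pulled off their own optima; balancing the two terms then yields the $T^{2/(3-\kappa)}$ exponent and its Mat\'ern/SE specializations. You are also correct that a single-bump Fano argument only charges $O(\epsilon^{-2}\log M)$ queries and loses the $\kappa$-dependence, so the query count must scale with the number of packing directions.

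However, the instance as you describe it does not actually produce a hard problem, and the missing ingredient is exactly the piece you defer to ``weight bookkeeping.'' If each helper $j$ carries a $\Theta(1)$-height bump $V_j$ at a known location $p_j$ and the $\epsilon$-scale perturbation $\eta$ is supported away from all $p_j$, those bumps enter $g$, and hence $f_{k^\star}=\alpha_{k^\star}h_{k^\star}+(1-\alpha_{k^\star})g$, with coefficient at least $(1-\alpha_{k^\star})/K$. Since $\epsilon\to 0$ as $T$ grows while $K$ is fixed, eventually $\epsilon\ll 1/K$, so $\argmax_x f_{k^\star}(x)$ sits at a \emph{known} bump location rather than in the support of $\eta$; client $k^\star$ then achieves essentially zero regret by querying that point forever, and your claim that its suboptimality is $\alpha_*$ times the error in estimating $\eta$ collapses. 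The necessary repair is to specify $h_{k^\star}$ as a suitable negative multiple of a \emph{shared} helper bump so that the bump cancels exactly out of $f_{k^\star}$ while surviving with a $\Theta(\min\{\alpha_{k^\star},1-\alpha_{k^\star}\})$ coefficient in every helper's personalized objective. This one cancellation is doing triple duty: it makes the victim's objective be driven by $\eta$ alone, it keeps the helpers pinned away from $\mathrm{supp}(\eta)$ (note that with distinct locations $p_j$ your ``$\Omega(1)$ per informative query'' claim is also false when $\alpha_j$ is small, since $V_j$ enters $f_j$ only with weight $\alpha_j+(1-\alpha_j)/K$), and it is where the factor $\alpha_*$ actually comes from. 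With the repair, the exploration cost per informative query is $\Omega(\alpha_*)$ rather than $\Omega(1)$, so the displayed balance becomes $\min_N(\alpha_* N+\alpha_* T\sqrt{\gamma_N/N})$ up to constants; this still closes to $\Omega(\alpha_* T^{2/(3-\kappa)})$ because the resulting power $\alpha_*^{c}$ with $c\le 1$ dominates $\alpha_*$, but that rederivation has to be carried out rather than asserted.
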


Theorem~\ref{Theorem:LB} shows that the regret performance of CEPE (and S-CEPE) is order optimal up to logarithmic factors. Please refer to Appendix~\ref{proof:LB} for a detailed proof. 

\begin{remark}
\cite{Shi2021Personalization} also conjectured a lower bound for the regret in the finite arm setting. Their lower bound is, however, different in the sense that it is a distribution-dependent lower bound. The distribution-dependent lower bounds typically are given in terms of the gaps (in the mean and KL-divergence) between the best arms and suboptimal arms. These bounds do not apply to our continuum arm setting as such gaps are always zero in case of infinite arms. We thus use a different method to prove a minimax bound on the regret. 
\end{remark}

\section{Empirical Studies}

In this section, we corroborate our theoretical results with empirical evidence that compares the regret incurred by CEPE against several baseline algorithms and documents the reduction in communication cost offered by S-CEPE. We begin with describing the experimental setup followed by discussing the particular experiments and their results.

\subsection{Experimental Setup}
We consider a collaborative learning setup among $K = 50$ clients. The personalization parameter of each client is drawn uniformly from the interval $[0.1, 0.9]$, independent of all other clients. The observation function for each client is drawn uniformly at random from the following set of $9$ functions: $\{h: h(x_1, x_2) = \Lambda(x_1^{i}, x_2^{j}), i,j \in \{1,2,3\}\}$, where $\Lambda: [0,1]^2\to \R$ is a standard two-dimensional benchmark function for Bayesian Optimization. We vary the choice of $\Lambda$ across experiments and specify it along with the corresponding experiment. We implement all algorithms using a uniform discretization of $900$ points over the domain, $[0, 1]^2$. We use a SE kernel with lengthscale parameter $0.2$. The observations are corrupted with Gaussian noise with $\sigma^2 = 0.01$, and $\lambda$ is set to $0.01$. The results in all the experiments are obtained for a time horizon of $T = 2000$ steps by averaging over $5$ Monte Carlo runs.

\begin{figure*}[h]
\centering
\subfloat[Comparison of regret and communication cost in S-CEPE]{\label{fig:sparseflex_plot}\centering \includegraphics[scale = 0.335]{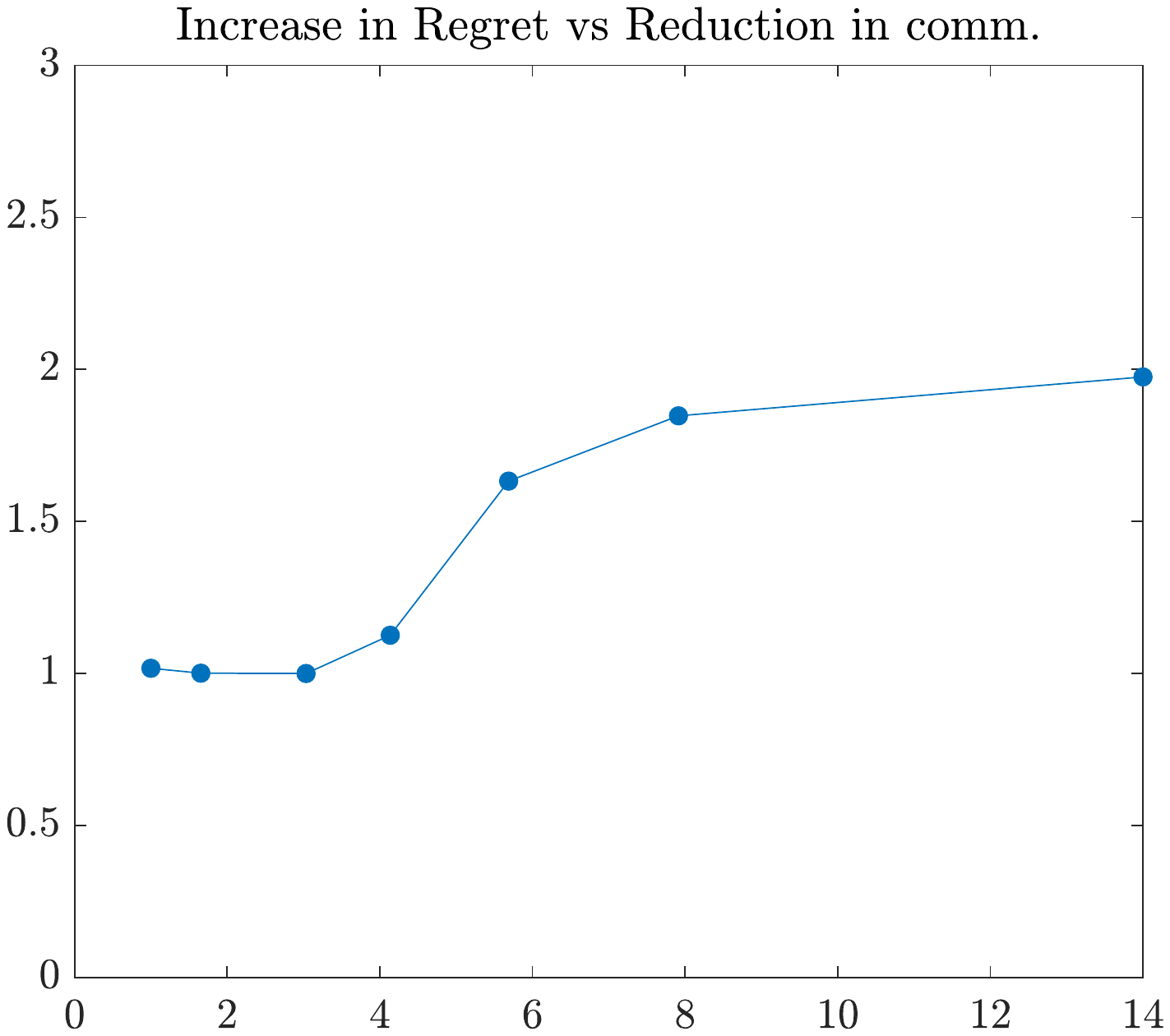}}
~
\subfloat[Regret incurred by different algorithms]{\label{fig:comparison_plot_branin}\centering \includegraphics[scale = 0.265]{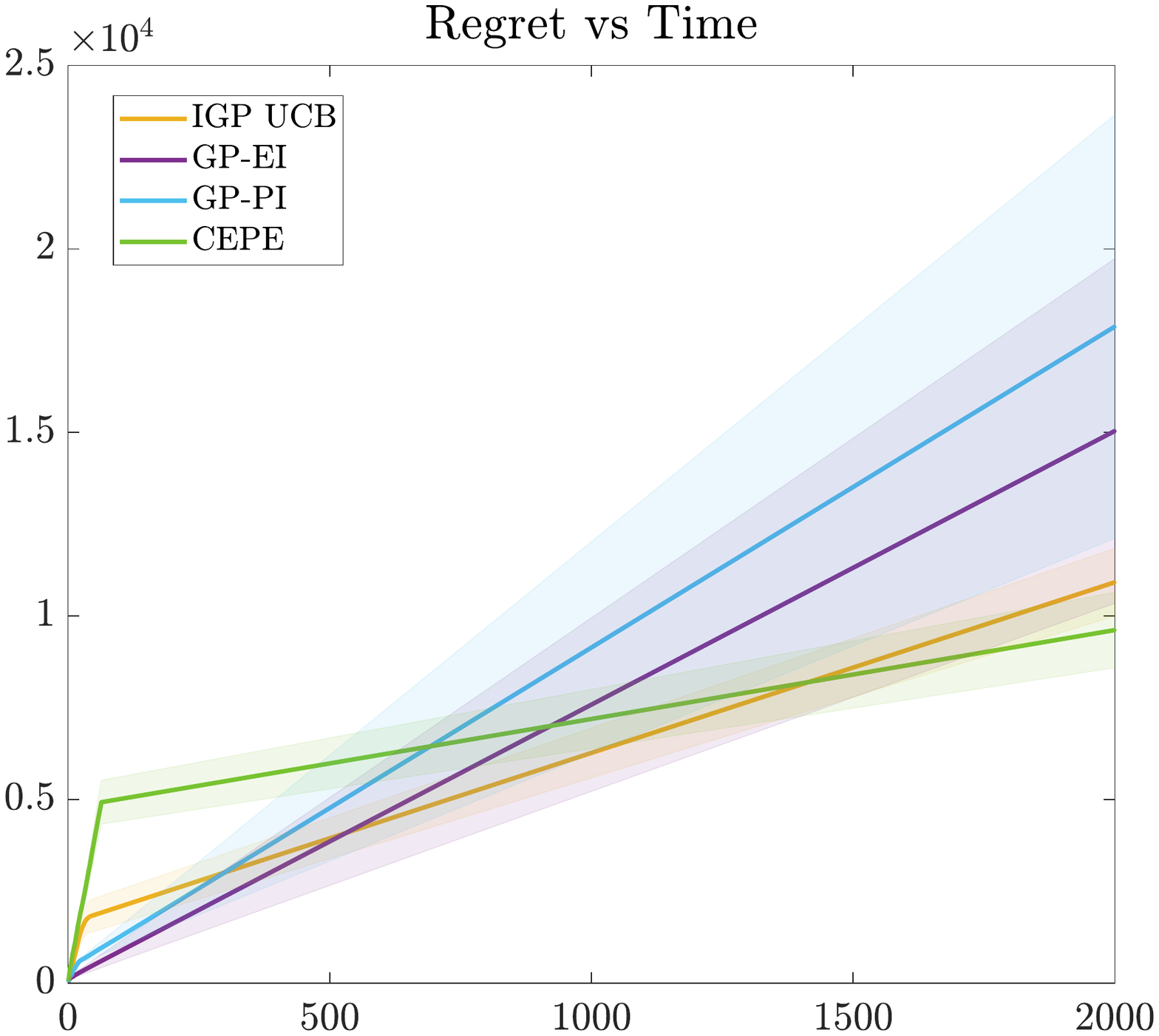}}
~
\subfloat[Regret incurred by different algorithms]{\label{fig:comparison_plot_camel}\centering \includegraphics[scale = 0.3]{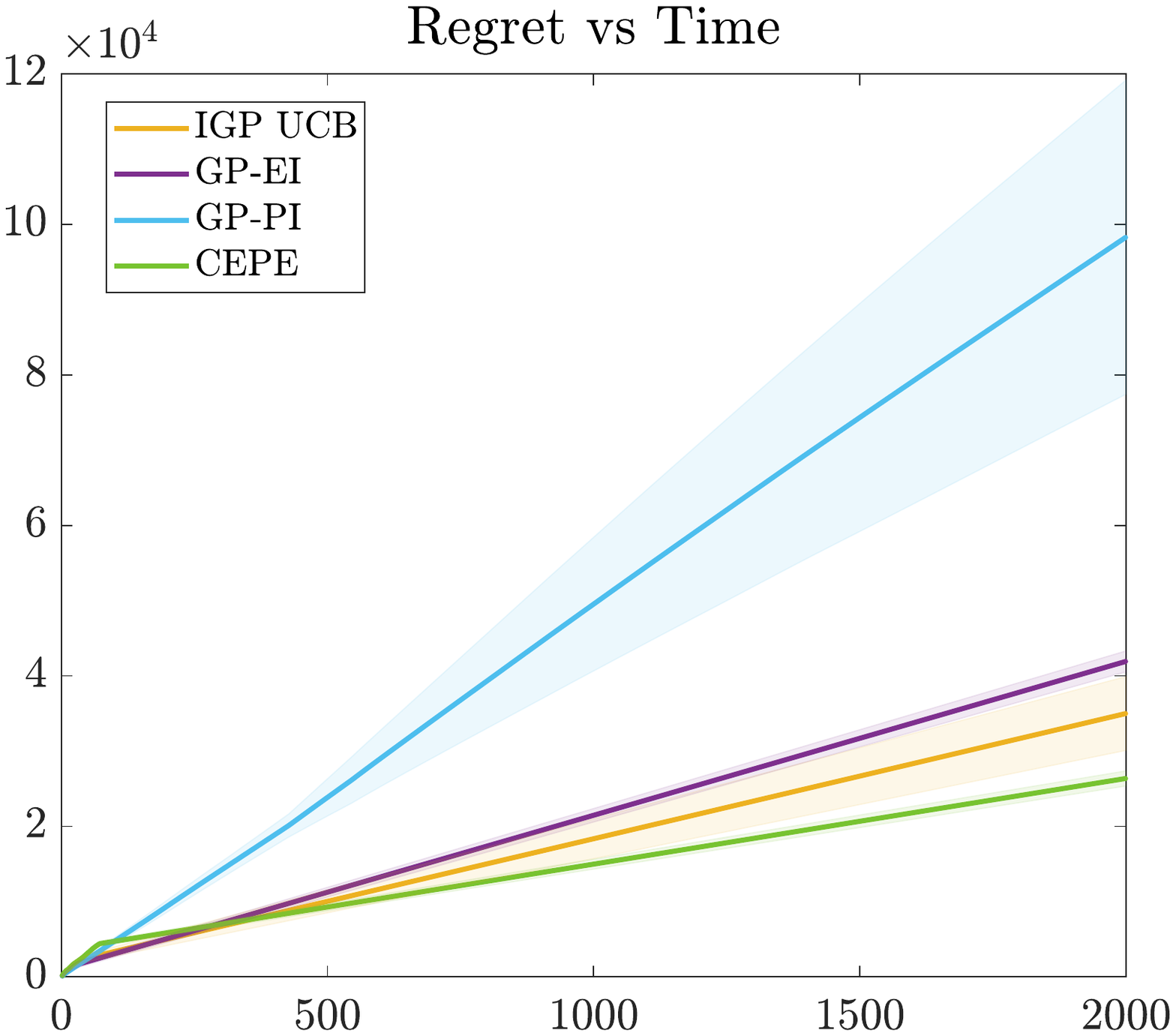}}
~
\caption{(a) Cumulative regret of S-CEPE after $T = 2000$ steps relative to CEPE plotted against the reduction in communication offered relative to CEPE when S-CEPE is run with different sizes of inducing sets. (b, c) Cumulative regret incurred by different algorithms over $T = 2000$ steps for different objective functions ((b) Branin, (c) Function in~\cite{sobester2008engineering})}
\label{fig:plots}
\end{figure*}

\subsection{Results}
In the first experiment, we compare the regret performance of CEPE against baseline algorithms: IGP-UCB~\citep{Chowdhury2017}, GP-EI (Expected Improvement)~\citep{wang2014theoreticalGPEI} and GP-PI (Probability of Improvement)~\citep{wang2018regretGPPI}. Since these baseline algorithms are designed for the single client setting, we appropriately modify them to adapt them to the collaborative learning setup considered in this work. Specifically, for the baseline algorithms we consider the benign setting where at each time instant $t$, the entire history of all the query points and observations of each client upto time $t - 1$ is accessible by all other clients. Each client $i$ uses this history of observations upto time $t - 1$ to construct the estimate of the observation functions of other clients which in turn is used to construct the acquisition function for the query point at time instant $t$ to maximize its personalized objective function, $f_i$. The major difference here is that query points are chosen \emph{greedily} by the clients to maximize their own objective function which might make it difficult for other clients to satisfactorily estimate (and optimize) their own objective functions. Please refer to Appendix~\ref{sec:experiments} for a more detailed description of parameter settings for the different algorithms. \\

We perform this comparison for two different choices of observation functions. In the first case, the base function $\Lambda$ is set to the Branin function~\citep{Azimi2012, Picheny2013} and the overall cumulative regret incurred by different algorithms is plotted in Fig.~\ref{fig:comparison_plot_branin}. In the second case, $\Lambda$ is set to the two-dimensional extension of the benchmark function proposed in~\cite{sobester2008engineering} and overall the cumulative regret for different algorithms for this experiment is plotted in Fig.~\ref{fig:comparison_plot_camel}. As it can be seen from both the plots, the regret incurred by CEPE is lesser than that incurred by other baseline algorithms. The results are consistent with our discussion in Section~\ref{sec:flex_description} showing that baseline GP bandit algorithms do not perform well in the collaborative setting, even with communication at each round. The altruistic observations obtained during the exploration sequence in CEPE helps all the clients and contribute towards an overall lower cumulative regret. \\

In the second experiment, we study the trade-off between communication cost and regret offered by S-CEPE. In particular, we evaluate the overall cumulative regret of S-CEPE for varying sizes of the set of inducing points obtained by varying $q_0$. Recall that the number of inducing points determines the communication cost. This is compared against the regret and communication cost incurred by CEPE. For both CEPE and S-CEPE, we set $N_T = 64$ which also determines the communication cost of CEPE. The objective functions for this case are obtained by setting $\Lambda$ to the Branin function. We plot the ratio of overall cumulative regret between S-CEPE and CEPE against the ratio between the communication cost of CEPE to S-CEPE in Fig.~\ref{fig:sparseflex_plot}. As seen from the plot, S-CEPE offers upto a $14$-fold reduction in communication cost while sacrificing regret by a factor of less than $2$. This shows the efficiency of sparse approximation methods in practice.

\section{Conclusion}

In this work, we formulated the personalization framework for kernelized bandits that allows to tackle the problem of statistical heterogeneity in distributed learning settings. Within this framework, each client can choose to trade-off between the generalization capabilities of a global reward function and a locally focused reward function that aims to fit local data only. For this framework, we proposed a new algorithm called CEPE that achieves a regret of $\tilde{\cO}(T^{2/(3- \kappa)})$, where $\kappa$ depends on the maximal information gain corresponding to the underlying kernel. CEPE builds upon fundamental insight that in such a collaborative learning setup with personalized objective functions, each client needs to take altruistic actions in order to help everyone optimize their own objective function. We also provide a matching lower bound, establishing the order optimality of the regret incurred by CEPE. We also propose a variant of CEPE, called S-CEPE that significantly reduces the communication cost and is based on sparse approximation of GP models. An interesting follow-up direction to analyze the behaviour of neural nets in distributed learning by connecting it to the insight from this result using the concept Neural Tangent Kernel~\cite{jacot2018neural}. 


\bibliography{citations}
\bibliographystyle{icml2022}

\newpage
\appendix

\onecolumn

\section{Proof of Theorem~\ref{thm:regret_fed_exp}}
\label{proof:regret_fed_exp}

To bound the regret of CEPE, we consider the regret incurred during the exploration and exploitation epochs separately. The regret corresponding to the exploration epoch is bounded with a constant factor of the cardinality of the exploration epoch, yielding the $K N_T$ term. The bound on the regret corresponding to the exploitation epoch in obtained in two steps. In the first step, we use the choice of $x_{i, t}$ and Lemma~\ref{lemma:concentration_bound} to bound $|f_i(x_i^*) - f(x_{i,t})|$ in terms of the posterior standard deviation at $x_i^*$ and $x_{i,t}$.  It is notable that, in this analysis, since $x_{i, t}$ in the exploration epoch of CEPE are selected in a purely exploratory way, we can use Lemma~\ref{lemma:concentration_bound} which provides tighter confidence intervals compared to the ones used in the analysis of GP-UCB and GP-TS~\cite{Chowdhury2017}. That is the key in proving an always sublinear regret bound for CEPE. See also~\cite{vakili2021optimal} for further discussions on the confidence intervals. In the second step, we show that the point selection rule in the exploration epoch, based on maximally reducing the uncertainty, allows us to bound both of the posterior standard deviation terms mentioned above by $\sqrt{\gamma_{N_t}/N_t}$, up to an absolute constant factor. We arrive at the theorem by summing the instantaneous regret over $i$ and $t$. \\

Before bounding the regret, we first obtain an upper bound on the RKHS norm of the objective functions of the clients. For any client $i \in \{1,2,\dots, K\}$, we have,
\begin{align*}
    \|f_i\|_{H_k} & = \left\| \alpha_i h_i + \frac{1 - \alpha_i}{K} \sum_{j = 1}^K h_j\right\|_{H_k} \\
    & \leq \alpha_i \|h_i\|_{H_k} + \frac{1 - \alpha_i}{K} \sum_{j = 1}^K \|h_j\|_{H_k} \\
    & \leq \alpha_i B_i + \frac{1 - \alpha_i}{K} \sum_{j = 1}^K B_j.
\end{align*}
For each $i \in \{1,2,\dots, K\}$ define $\displaystyle \bar{B}_i := \alpha_i B_i + \frac{1 - \alpha_i}{K} \sum_{j = 1}^K B_j$. Thus, $\bar{B}_i$ is an upper bound on the RKHS norm of the objective functions. \\

We first consider the regret incurred during the exploration epoch. Let $R_1$ denote the regret incurred during the exploration epoch. We then have
\begin{align*}
    R_1 & = \sum_{i = 1}^K \sum_{t \in \cA(T)} f_i(x_i^*) - f_i(x_{i,t}) \\
     & \leq \sum_{i = 1}^K \sum_{t \in \cA(T)} 2\bar{B}_i \\
     & \leq \sum_{i = 1}^K 2\bar{B}_i |\cA(T)| \\
     & \leq 2 (\max_i \bar{B}_i) K N_T.
\end{align*}
In the second step, we have used $\sup_{x \in \cX} f(x) \leq \|f\|_{H_k}$, which can be shown as follows
\begin{align*}
\sup_{x \in \cX} f(x) & =  \sup_{x \in \cX} \ip{f, k(\cdot, x)}\\
&\leq \sup_{x \in \cX} \|f \|_{H_k} \|k(\cdot, x)\|_{H_k}\\
&\leq \left(\sup_{x\in\cX}k(x,x) \right)\|f\|_{H_k}.
\end{align*}
Using $k(x,x) \leq 1$ for all $x \in \cX$ gives us the required bound. \\


We now consider the regret incurred during the exploitation epoch, which we denote by $R_2$. We first consider the regret incurred by a single client $i$. Based on Assumption~\ref{assumption:discretization}, we consider a discretization $\cD_T$ with $|\cD_T| = \cO(T^d)$ such that for all $x \in \cX$, $i \in \{1, 2, \dots, K\}$ and $t \leq T$, we have, $|h_i(x) - h_i([x]_{\cD_T})| \leq N_T^{-1/2}$ and $|\mu_{t}^{(h_i)}(x) - \mu_{t}^{(h_i)}([x]_{\cD_T})| \leq N_T^{-1/2}$, where $[x]_{\cD_T}$ denotes the point closest to $x$ on the discretization $\cD_T$. Note that we do not need to explicitly construct such a discretization for the actual algorithm. It is considered only for the purpose of analysis. \\

Using Lemma~\ref{lemma:concentration_bound} and a union bound over all the points in the discretization $\cD_T$, we can conclude that with probability at least $1 - \delta_0/K$, $|h_i(x') - \mu_{t-1}^{(h_i)}(x')| \leq \beta(B_i, \delta_0') \sigma_{t-1}^{(h_i)}(x')$ holds for all $x' \in \cD_T$, $t \leq T$ and given client $i \in \{1, 2, \dots, K\}$ where $\delta_0' = \delta_0/(2KT|\cD_T|)$. Consequently, for any client $i$ we have,
\begin{align*}
    |f_i(x') - \mu_{t-1}^{(f_i)}(x')| & = \left| \left(\alpha_i h_i(x') + \frac{1 - \alpha_i}{K} \sum_{j = 1}^K  h_j(x') \right) - \left(\alpha_i \mu_{t-1}^{(h_i)}(x') + \frac{1 - \alpha_i}{K} \sum_{j = 1}^K \mu_{t-1}^{(h_j)}(x') \right)  \right| \\
    & \leq \alpha_i |h_i(x') - \mu_{t-1}^{(h_i)}(x')|  + \frac{1 - \alpha_i}{K} \sum_{j = 1}^K |h_j(x') - \mu_{t-1}^{(h_j)}(x')| \\
    & \leq \alpha_i \cdot  \beta(B_i, \delta_0') \cdot \sigma_{t}^{(h_i)}(x') + \frac{1 - \alpha_i}{K} \sum_{j = 1}^K \beta(B_j, \delta_0')  \cdot\sigma_{t}^{(h_j)}(x').
\end{align*}
The above relation holds for all $i \in \{1,2,\dots, K\}$, $x' \in \cD_T$, $t \leq T$ with probability at least $1 - \delta_0$, which follows immediately from the union bound. With a slight abuse of notation, we define $\displaystyle \sigma_{t-1}^{(f_i)}(x') := \alpha_i \cdot  \beta(B_i, \delta_0') \cdot \sigma_{t-1}^{(h_i)}(x') + \frac{1 - \alpha_i}{K} \sum_{j = 1}^K \beta(B_j, \delta_0')  \cdot\sigma_{t-1}^{(h_j)}(x')$. \\

With this relation, we move onto bound the regret incurred by any client $i$ during the exploitation epoch. We have,
\begin{align*}
    \sum_{t \notin \cA(T)} f_i(x_i^*) - f_i(x_{i,t}) & = \sum_{t \notin \cA(T)} f_i(x_i^*) - f_i([x_{i,t}]_{\cD_T}) + f_i([x_{i,t}]_{\cD_T}) - f_i(x_{i,t}) \\
    & \leq \sum_{t \notin \cA(T)} f_i(x_i^*) - f_i([x_{i,t}]_{\cD_T}) + N_T^{-1/2} \\
    & \leq \sum_{t \notin \cA(T)} f_i(x_i^*) - \mu_{t-1}^{(f_i)}(x_i^*) + \mu_{t-1}^{(f_i)}(x_{i,t})  - f_i([x_{i,t}]_{\cD_T}) + N_T^{-1/2} \\
    & \leq \sum_{t \notin \cA(T)} f_i(x_i^*) - \mu_{t-1}^{(f_i)}(x_i^*) + \mu_{t-1}^{(f_i)}(x_{i,t}) -  \mu_{t-1}^{(f_i)}([x_{i,t}]_{\cD_T}) + \mu_{t-1}^{(f_i)}([x_{i,t}]_{\cD_T})  - f_i([x_{i,t}]_{\cD_T}) + N_T^{-1/2} \\
    & \leq \sum_{t \notin \cA(T)} f_i(x_i^*) - \mu_{t-1}^{(f_i)}(x_i^*) + \mu_{t-1}^{(f_i)}([x_{i,t}]_{\cD_T})  - f_i([x_{i,t}]_{\cD_T}) + 2N_T^{-1/2} \\
    & \leq \sum_{t \notin \cA(T)} \sigma_{t-1}^{(f_i)}(x_i^*) + \sigma_{t-1}^{(f_i)}([x_{i,t}]_{\cD_T}) + 2N_T^{-1/2}.
\end{align*}
In the third step, we used the fact that $x_{i,t} = \argmax_{x \in \cX} \mu_{t-1}^{(f_i)}(x) $. The expression on the RHS in the above equation is bounded using the following lemma. \\

\begin{lemma}
Consider the epoch of points $\{x_1, x_2, \dots, \}$ generated using the maximum posterior variance sampling scheme, that is, $x_t = \argmax_{x \in \cX} \sigma_{t - 1}(x)$, where $\sigma_{t-1}^2$ is the posterior variance computed from the history $\{x_1, x_2, \dots, x_{t-1}\}$. Then, at any time $t$, the following inequality is true.
\begin{align*}
    \sup_{x \in \cX} \sigma_{t}(x) \leq \sqrt{\frac{12\gamma_t}{t}},
\end{align*}
where $\gamma_t$ is the information gain after $t$ steps.
\label{lemma:posterior_std_dev_upper_bound}
\end{lemma}

The lemma provides an upper bound on the posterior standard deviation under the maximum posterior variance sampling scheme, as adopted in the exploration epoch of CEPE. The proof of this lemma is provided at the end of this section. \\

Note that $t \notin \cA(T) \implies t \notin \cA(t)$ which in turn implies that $|\cA(t)| > N_t$. Thus, at any time instant $t$ in the exploitation epoch, the posterior standard deviation is computed based on a history of at least $N_t$ observations taken according to the maximum posterior variance sampling scheme. Hence, using Lemma~\ref{lemma:posterior_std_dev_upper_bound} we can conclude that 
$\sigma_{t-1}^{(h_i)}(x) \leq \sqrt{12 \gamma_{N_t}/N_t}$ for all $x \in \cX$ and all clients $i \in \{1,2,\dots, K\}$. Consequently, for any client $i$ and any $x \in \cX$, 
\begin{align*}
    \sigma_{t-1}^{(f_i)}(x) & = \alpha_i \cdot  \beta(B_i, \delta_0') \cdot \sigma_{t-1}^{(h_i)}(x) + \frac{1 - \alpha_i}{K} \sum_{j = 1}^K \beta(B_j, \delta_0')  \cdot\sigma_{t-1}^{(h_j)}(x) \\
    & \leq \alpha_i \cdot  \beta(B_i, \delta_0') \cdot \sqrt{\frac{12 \gamma_{N_t}}{N_t}} + \frac{1 - \alpha_i}{K} \sum_{j = 1}^K \beta(B_j, \delta_0')  \cdot \sqrt{\frac{12 \gamma_{N_t}}{N_t}}\\
    & \leq \sqrt{\frac{12 \gamma_{N_t}}{N_t}} \cdot \left( \alpha_i \cdot  \beta(B_i, \delta_0') + \frac{1 - \alpha_i}{K} \sum_{j = 1}^K \beta(B_j, \delta_0') \right) \\
    & \leq \sqrt{\frac{12 \gamma_{N_t}}{N_t}} \cdot \left( \alpha_i \cdot  \beta(B_i, \delta_0') + \frac{1 - \alpha_i}{K} \sum_{j = 1}^K \beta(B_j, \delta_0') \right) \\
    & = \sqrt{\frac{12 \gamma_{N_t}}{N_t}} \cdot \beta(\bar{B}_i, \delta_0'),
\end{align*}
where the last step follows from the definition of $\beta(B, \delta)$ and $\bar{B}_i$. On plugging this bound in the expression for the regret incurred by a single client during the exploitation epoch, we obtain the following bound on $R_2$, the overall regret incurred during the exploitation epoch, which holds with probability of at least $1 - \delta_0$.
\begin{align*}
    R_2 & = \sum_{i = 1}^K \sum_{t \notin \cA(T)} f_i(x_i^*) - f_i(x_{i,t}) \\
    & \leq \sum_{i = 1}^K \sum_{t \notin \cA(T)} \sigma_{t-1}^{(f_i)}(x_i^*) + \sigma_{t-1}^{(f_i)}([x_{i,t}]_{\cD_T}) + 2N_T^{-1/2} \\
    & \leq \sum_{i = 1}^K \sum_{t \notin \cA(T)} \left(\sqrt{\frac{12 \gamma_{N_t}}{N_t}} \cdot \beta(\bar{B}_i, \delta_0') + \sqrt{\frac{12 \gamma_{N_t}}{N_t}} \cdot \beta(\bar{B}_i, \delta_0') + 2N_T^{-1/2} \right) \\
    & \leq 2 K  T \left[ \left( \max_{i} \bar{B}_i + R \sqrt{\frac{2}{\lambda} \log \left( \frac{2}{\delta_0'}\right)}\right)  \sqrt{\frac{12\gamma_{N_T}}{N_T}} + \frac{1}{\sqrt{N_T}}\right],
\end{align*}
The last step follows by noting that 
\begin{align*}
    \sum_{t \notin \cA(T)} \sqrt{\frac{\gamma_{N_t}}{N_t}} & \leq T \cdot  \left( \frac{1}{T} \sum_{t = 1}^T \sqrt{\frac{\gamma_{N_t}} {N_t}}\right) \leq T \cdot   \sqrt{\frac{\gamma_{N_T}} {N_T}},
\end{align*}
where the last inequality is a result of applying Jensen's inequality on the concave function $t \to \sqrt{\gamma_t/t}$ and the monotonicity of $N_t$. On adding the bounds on $R_1$ and $R_2$, we arrive at the theorem. 



\subsection{Proof of Lemma~\ref{lemma:posterior_std_dev_upper_bound}}

The proof is based on Lemma 4 of~\cite{Chowdhury2017} which bounds the sum of sequential posterior standard deviations of a GP model as follows
\begin{align*}
    \sum_{s = 1}^t \sigma_{s-1}(x_s) \leq \sqrt{12 t \gamma_t}.
\end{align*}
This is true for all $t \geq 1$ and for any sampling scheme that sequentially generates the points $\{x_1, x_2, \dots\}$. In the case of maximally reducing the variance, for any choices of $s' \leq s$, we have 
\begin{eqnarray*}
    \sigma_{s'-1}(x_{s'}) &\geq&\sigma_{s'-1}(x_{s})\\ &\geq& \sigma_{s-1}(x_{s}).
\end{eqnarray*}
The first inequality comes from the selection rule. The second inequality comes from the fact that conditioning a GP on a larger set of points reduces the variance (that follows from the positive definiteness of the covariance matrix).  \\

This implies that $\sigma_{t-1}(x_t)$ is the smallest term in the sum $\sum_{s = 1}^t \sigma_{s-1}(x_s)$ and hence has to be smaller than the mean. Consequently, we have,
\begin{align*}
    \sup_{x \in \cX} \sigma_{t-1}(x) = \sigma_{t-1}(x_t) \leq \frac{1}{t} \sum_{s = 1}^t \sigma_{s-1}(x_s) \leq \sqrt{\frac{12 \gamma_t}{t}},
\end{align*}
as required.

\section{Proof of Theorem~\ref{thm:sparse_fed_exp2}}
\label{proof:sparse_fed_exp2}

We first introduce some additional notation corresponding the feature representation of functions in a RKHS, which is used in a part of the proof. In particular, the kernel function $k(\cdot, \cdot)$ is associated with a nonlinear feature map $\phi(\cdot): \R^d \to \cH_k$ such that $k(x, x') = \phi(x')^{\top} \phi(x)$. Consequently, the reproducing property can also be written as $h(x) = \phi(x)^{\top} h$. Corresponding to a client $i$, at each time instant $t$, we define the matrix $\Phi_{i,t} = [\phi(x_{i,1}), \phi(x_{i,2}), \dots, \phi(x_{i,t})]^{\top}$ based on the points sampled by client $i$, i.e., $\bfx_{i,t}$.  \\

Let $\bfz_{i,t} = \{z_{i,1}, z_{i,2}, \dots, z_{i, m_i}\}$ denote the set of inducing points chosen from $\bfx_{i,t}$ according to the strategy outlined in Section~\ref{section:sparsepolicy}. Let $S_{i, t}$ be the diagonal $\R^{t \times t}$ matrix with $W_{i,j}/\sqrt{p_{i,j}}$ as the $j^{\text{th}}$ element on the diagonal. Recall that $p_{i,j} = q_0 \left[ \sigma_{t-1}^{(h_i)}(x_{i,j}) \right]^2$ was the probability of choosing point $x_{i, j}$ for $j = 1,2,\dots, t$ and $W_{i,j}$ was the corresponding $\{0,1\}$ random variable that determined whether that point was a part of the inducing set $\bfz_{i,t}$ or not. Consequently, we have the following relation - 
\begin{align*}
    \sum_{z \in \bfz_{i,t}} \frac{1}{p_{i,j}} \phi(z) \phi(z)^{\top} = \sum_{j =1}^{t}  \frac{W_{i,j}}{p_{i,j}} \phi(x_{i,j}) \phi(x_{i,j})^{\top} = \Phi_{i,t} S_{i,t} S_{i,t}^{\top} \Phi_{i,t}^{\top}.
\end{align*}

Furthermore, if $\tilde{H}_k$ denotes the subspace of $H_k$ spanned by the inducing points, then using $S_{i,t}$, we can also define the projection operator onto this subspace, which we denote by $P_{i,t}$. In particular, we have, 
\begin{align*}
    P_{i,t} = \Phi_{i,t} S_{i,t} (S_{i,t}^{\top} \Phi_{i,t}^{\top}\Phi_{i,t} S_{i,t})^{+} S_{i,t}^{\top} \Phi_{i,t}^{\top},
\end{align*}
where $A^{+}$ denotes the pseudo-inverse of the matrix $A$. Using $P_{i,t}$, we define an approximate feature map $\tilde{\phi}(\cdot) : P_{i,t} \phi( \cdot)$ that corresponds to the subspace $\tilde{H}_k$. Similar to $\Phi_{i,t}$, we define $\tilde{\Phi}_{i,t} = [\tilde{\phi}(x_{i,1}), \tilde{\phi}(x_{i,2}), \dots, \tilde{\phi}(x_{i,t})]^{\top} = \Phi_{i,t} P_{i,t}$. Lastly, we also define two more matrices, $A_{i,t} = \Phi_{i,t}^{\top} \Phi_{i,t} + \lambda I$ and $\tilde{A}_{i,t} = \tPhi_{i,t}^{\top} \tPhi_{i,t} + \lambda I$. \\

With the notations set up, we move on to proving Theorem~\ref{thm:sparse_fed_exp2}. The basic proof follows the same structure as that of Theorem~\ref{thm:regret_fed_exp}. We consider the regret separately in the exploration, communication and the exploitation sequence. For the exploration and the communication sequence, the regret is bounded with a constant factor of their lengths. For each client $i$, 
since the algorithm queries the same point, $\tilde{x}^*_i = \argmax_{x \in \cX} \tilde{\mu}_{N_T}^{(f_i)}(x)$ throughout the exploitation sequence, an upper bound on the regret incurred during the exploitation sequence is simply obtained by bounding the error $f_i(x^*) - f(\tilde{x}^*_{i,N_T})$ and multiplying with $T$, the length of the time horizon. The bound on $f_i(x^*_i) - f_i(\tilde{x}^*_{i, N_T})$ is obtained using Lemma~\ref{lemma:concentration_bound_sparse} using argument similar to the ones used in the proof of Theorem~\ref{thm:regret_fed_exp}. The overall regret is then obtained by summing the contributions of these two different terms. \\

We first consider the regret incurred during the exploration and communication phases. Let $R_1$ denote the regret incurred during the this period. We then have
\begin{align*}
    R_1 & = \sum_{i = 1}^K \sum_{t = 1}^{N_T + N_T^{(c)}} f_i(x_i^*) - f_i(x_{i,t}) \\
     & \leq \sum_{i = 1}^K \sum_{t = 1}^{N_T + N_T^{(c)}} 2\bar{B}_i \\
     & \leq \sum_{i = 1}^K 2\bar{B}_i (N_T + N_T^{c}) \\
     & \leq 2 \cdot (\max_i \bar{B}_i) \cdot  K (N_T + 9(1+ 1/\lambda)q_0 \gamma_{N_T}) \\
     & \leq 2 \cdot  (1 + 9(1+ 1/\lambda)q_0) \cdot  (\max_i \bar{B}_i) \cdot  K N_T.
\end{align*}
In the second step, we have once again used $\sup_{x \in \cX} h(x) \leq \|h\|_{H_k}$.

To bound the regret incurred during the exploitation sequence, denoted by $R_2$, we consider a discretization $\cD_{N_T}$ with $\cD_{N_T} = O(T^d)$ of the domain $\cX$. Based on Assumption~\ref{assumption:discretization}, we have that for all $x \in \cX$, $i \in \{1, 2, \dots, K\}$, $|h_i(x) - h_i([x]_{\cD_{N_T}})| \leq N_T^{-1/2}$ and $|\tilde{\mu}_{N_T}^{(h_i)}(x) - \tilde{\mu}_{N_T}^{(h_i)}([x]_{\cD_{N_T}})| \leq N_T^{-1/2}$, where $[x]_{\cD_{N_T}}$ denotes the point closest to $x$ on the discretization $\cD_{N_T}$. As before, we do not need to explicitly construct such a discretization for the actual algorithm. \\

Using Lemma~\ref{lemma:concentration_bound_sparse}, a union bound over all the points in the discretization $\cD_{N_T}$ and a series of argument similar to the ones used in the proof of Theorem~\ref{thm:regret_fed_exp} (See Appendix~\ref{proof:regret_fed_exp}), we can conclude that the following relations hold for all clients $i\in \{1,2,\dots, K\}$ for any given fixed $x' \in \cD_{N_T}$ with probability at least $1 - \delta_0/3$:
\begin{align*}
    |f_i(x') - \tilde{\mu}_{N_T}^{(f_i)}(x')| & \leq \alpha_i \cdot  \tilde{\beta}(B_i, \delta_0'') \cdot \tilde{\sigma}_{N_T}^{(h_i)}(x') + \frac{1 - \alpha_i}{K} \sum_{j = 1}^K \tilde{\beta}(B_j, \delta_0'')  \cdot \tilde{\sigma}_{N_T}^{(h_j)}(x')
\end{align*}
and, 
\begin{align*}
    f_i(x_i^*) - f_i(\tilde{x}^*_{i, N_T}) \leq \tilde{\sigma}_{N_T}^{(f_i)}(x_i^*) +  \tilde{\sigma}_{N_T}^{(f_i)}([\tilde{x}^*_{N_T}]_{\cD_{N_T}}) + 2N_T^{-1/2},
\end{align*}
where $\delta_0'' = \delta_0/(6K|\cD_{N_T}|)$,  $\displaystyle \tilde{\sigma}^{(f_i)}_{N_T}(x) := \alpha_i \cdot  \tilde{\beta}(B_i, \delta_0'') \cdot \tilde{\sigma}_{N_T}^{(h_i)}(x) + \frac{1 - \alpha_i}{K} \sum_{j = 1}^K \tilde{\beta}(B_j, \delta_0'')  \cdot \tilde{\sigma}_{N_T}^{(h_j)}(x)$ and $\tilde{x}^*_{i, N_T} = \argmax_{x \in \cX} \tilde{\mu}_{N_T}^{(f_i)}(x)$.


To bound $\tilde{\sigma}_{N_T}^{(f_i)}(\cdot)$, we first bound our \emph{approximate} predictive standard deviation $\tilde{\sigma}_{N_T}^{(h_j)}(\cdot)$ for all $j \in \{1,2,\dots, K\}$ based on our construction of the set of inducing points. For the particular choice of $q_0$ as used in Lemma~\ref{lemma:inducing_set_size}, the authors in~\cite{Calandriello2019} establish the following relation which holds with probability at least $1 - \delta_0/3$ for all clients $j \in \{1,2,\dots, K\}$: 
\begin{align*}
    \frac{1}{\chi} \left[\sigma_{N_T}^{(h_j)}(x) \right]^2 \leq \left[\tilde{\sigma}_{N_T}^{(h_j)}(x) \right]^2 \leq \chi \left[\sigma_{N_T}^{(h_j)}(x)\right]^2,
\end{align*}
where $\sigma_{N_T}^2(x)$ is the true predictive variance based on all the $N_T$ points queried during the exploration phase and $\chi = (1 + \varepsilon)/(1 - \varepsilon)$. This implies that \emph{approximate} posterior variance and the \emph{actual} posterior variance are within a constant factor of each other. On using this relation along with Lemma~\ref{lemma:posterior_std_dev_upper_bound}, we can conclude
\begin{align*}
    \tilde{\sigma}^{(f_i)}_{N_T}(x) & = \alpha_i \cdot  \tilde{\beta}(B_i, \delta_0'') \cdot \tilde{\sigma}_{N_T}^{(h_i)}(x) + \frac{1 - \alpha_i}{K} \sum_{j = 1}^K \tilde{\beta}(B_j, \delta_0'')  \cdot \tilde{\sigma}_{N_T}^{(h_j)}(x) \\
    & \leq \alpha_i \cdot  \tilde{\beta}(B_i, \delta_0'') \cdot \chi \cdot {\sigma}_{N_T}^{(h_i)}(x) + \frac{1 - \alpha_i}{K} \sum_{j = 1}^K \tilde{\beta}(B_j, \delta_0'')  \cdot \chi \cdot {\sigma}_{N_T}^{(h_j)}(x) \\
    & \leq \chi \sqrt{\frac{12\gamma_{N_T}}{N_T}} \cdot  \left(\alpha_i \cdot  \tilde{\beta}(B_i, \delta_0'')  + \frac{1 - \alpha_i}{K} \sum_{j = 1}^K \tilde{\beta}(B_j, \delta_0'') \right) \\
    & \leq \chi \sqrt{\frac{12\gamma_{N_T}}{N_T}} \cdot   \tilde{\beta}(\bar{B}_i, \delta_0'').
\end{align*}


Consequently, $R_2$ can be bounded as
\begin{align*}
    R_2 & = \sum_{i = 1}^K \sum_{t = N_T + N_T^{c} + 1}^T f_i(x_i^*) - f_i(\tilde{x}^*_{i, N_T}) \\
    & \leq T \sum_{i = 1}^K  \left( \tilde{\sigma}_{N_T}^{(f_i)}(x_i^*) +  \tilde{\sigma}_{N_T}^{(f_i)}([\tilde{x}^*_{N_T}]_{\cD_{N_T}}) + 2N_T^{-1/2} \right) \\
    & \leq 2KT  \left[ \tilde{\beta}(\bar{B}_i, \delta_0'') \sqrt{\frac{12 \gamma_{N_T}}{N_T}} + \frac{1}{\sqrt{N_T}} \right].
\end{align*}

Adding the bound on $R_1$ and $R_2$ yields the required result.

\subsection{Proof of Lemma~\ref{lemma:concentration_bound_sparse}}
\label{proof:concentration_bound_sparse}

Throughout the proof, we use the same notation as established earlier with the subscript corresponding to the client dropped for simplicity. We begin considering the definition of $\tilde{\mu}_t$. We have,
\begin{align*}
    h(x) - \tilde{\mu}_t(x) & = h(x) - k_{\bfz_t, x}^{\top} (\lambda K_{\bfz_t, \bfz_t} +  K_{\bfz_t, \bfx_t} K_{\bfx_t, \bfz_t})^{-1} K_{\bfz_t, \bfx_t} \bfy_t \\
    & = h(x) - k_{\bfz_t, x}^{\top} (\lambda K_{\bfz_t, \bfz_t} +  K_{\bfz_t, \bfx_t} K_{\bfx_t, \bfz_t})^{-1} K_{\bfz_t, \bfx_t} h_{1:t} - k_{\bfz_t, x}^{\top} (\lambda K_{\bfz_t, \bfz_t} +  K_{\bfz_t, \bfx_t} K_{\bfx_t, \bfz_t})^{-1} K_{\bfz_t, \bfx_t} {\epsilon}_{1:t},
\end{align*}
where $h_{1:t} = [h(x_1), h(x_2), \dots, h(x_t)]^{\top}$ and $\epsilon_{1:t} = [\epsilon_1, \epsilon_2, \dots, \epsilon_t]^{\top}$. \\

We focus on the first term which can be rewritten as
\begin{align*}
    |h(x) - k_{\bfz_t, x}^{\top} (\lambda K_{\bfz_t, \bfz_t} +  K_{\bfz_t, \bfx_t} K_{\bfx_t, \bfz_t})^{-1} K_{\bfz_t, \bfx_t} h_{1:t}| & = | \phi^{\top}(x) h - \phi^{\top}(x) \tilde{A}_t^{-1} \tPhi_t^{\top} h_{1:t}| \\ 
    & = | \phi^{\top}(x) h - \phi^{\top}(x) \tilde{A}_t^{-1} \tPhi_t^{\top} \Phi_t h| \\
    & \leq \left\| \phi^{\top}(x)\left(I - \tilde{A}_t^{-1} \tPhi_t^{\top} \Phi_t \right)  \right\|_{H_k} \|h\|_{H_k}.
\end{align*}

Consider, 
\begin{align*}
    \tilde{A}_t^{-1} \tPhi_t^{\top} \Phi_t  - I & = \tilde{A}_t^{-1}  \left(\tPhi_t^{\top} \Phi_t  - \tilde{A}_t \right) \\
    & = \tilde{A}_t^{-1} \left(\tPhi_t^{\top} \Phi_t  - (\tPhi_t^{\top} \tPhi_t + \lambda I) \right) \\
    & = \tilde{A}_t^{-1} \left(\tPhi_t^{\top} \Phi_t  - \tPhi_t^{\top} \Phi_t P_t  -  \lambda I \right) \\
    & = \tilde{A}_t^{-1} \tPhi_t^{\top} \Phi_t (I - P_t)   -  \lambda \tilde{A}_t^{-1} .
\end{align*}

Thus,
\begin{align*}
    \| \phi^{\top}(x)\left(I - \tilde{A}_t^{-1} \tPhi_t^{\top} \Phi_t \right)  \|_{H_k} & = \| \phi^{\top}(x) \left(\tilde{A}_t^{-1} \tPhi_t^{\top} \Phi_t (I - P_t)   -  \lambda I \right)  \|_{H_k} \\
    & \leq \| \phi^{\top}(x) \tilde{A}_t^{-1} \tPhi_t^{\top} \Phi_t (I - P_t)  \|_{H_k}  +  \lambda \|\phi^{\top}(x) \tilde{A}_t^{-1}  \|_{H_k} \\
    & \leq \sqrt{\phi^{\top}(x) \tilde{A}_t^{-1} \tPhi_t^{\top} \Phi_t (I - P_t)^2 \Phi_t^{\top} \tPhi_t \tilde{A}_t^{-1} \phi(x)}  +  \lambda \sqrt{\phi^{\top}(x) \tilde{A}_t^{-2} \phi(x)}  \\
    & \leq \sqrt{\phi^{\top}(x) \tilde{A}_t^{-1} \tPhi_t^{\top} \Phi_t (I - P_t) \Phi_t^{\top} \tPhi_t \tilde{A}_t^{-1} \phi(x)}  +  \lambda \sqrt{\phi^{\top}(x) \tilde{A}_t^{-2} \phi(x)}  \\
    & \leq \sqrt{\frac{\lambda}{1 - \varepsilon} \phi^{\top}(x) \tilde{A}_t^{-1} \tPhi_t^{\top} \tPhi_t \tilde{A}_t^{-1} \phi(x)}  +  \lambda \sqrt{\phi^{\top}(x) \tilde{A}_t^{-2} \phi(x)}  \\
    & \leq \sqrt{2 \left\{\frac{\lambda}{1 - \varepsilon} \left[\phi^{\top}(x) \tilde{A}_t^{-1} \phi(x) - \lambda\phi^{\top}(x) \tilde{A}_t^{-2} \phi(x) \right] + \lambda^2\phi^{\top}(x) \tilde{A}_t^{-2} \phi(x) \right\} }  \\
    & \leq \sqrt{2 \left\{\frac{\lambda}{1 - \varepsilon} \phi^{\top}(x) \tilde{A}_t^{-1} \phi(x)  + \lambda^2\phi^{\top}(x) \tilde{A}_t^{-2} \phi(x) \left(1 - \frac{1}{1 - \varepsilon} \right) \right\} }  \\
    & \leq \sqrt{ \left\{\frac{2\lambda}{1 - \varepsilon} \phi^{\top}(x) \tilde{A}_t^{-1} \phi(x) \right\} }  \\
    & \leq \sqrt{ \frac{2\lambda}{1 - \varepsilon} } \tilde{\sigma}_t(x).
\end{align*}
The fifth line follows from Lemma 1 in~\cite{Calandriello2018} and by noting that for the given choice of $q_0$, the dictionaries are $\varepsilon$-accurate with a probability of at least $1 - \delta_0/3$. On combining everything, we obtain,
\begin{align*}
    |h(x) - k_{\bfz_t, x}^{\top} (\lambda K_{\bfz_t, \bfz_t} +  K_{\bfz_t, \bfx_t} K_{\bfx_t, \bfz_t})^{-1} K_{\bfz_t, \bfx_t} h_{1:t}| \leq \| h \|_{H_k} \sqrt{ \frac{2\lambda}{1 - \varepsilon} } \tilde{\sigma}_t(x).
\end{align*}

We now consider the other term given by $k_{\bfz_t, x}^{\top} (\lambda K_{\bfz_t, \bfz_t} +  K_{\bfz_t, \bfx_t} K_{\bfx_t, \bfz_t})^{-1} K_{\bfz_t, \bfx_t} {\epsilon}_{1:t}$. For simplicity, we denote $k_{\bfz_t, x}^{\top} (\lambda K_{\bfz_t, \bfz_t} +  K_{\bfz_t, \bfx_t} K_{\bfx_t, \bfz_t})^{-1} K_{\bfz_t, \bfx_t} := \zeta^{\top}(x)$. Using the fact that the components of $\varepsilon_{1:t}$ are independent $R$-sub-Gaussian random variables, we have,
\begin{align*}
    \E \left[ \exp\left( \zeta^{\top}(x) {\varepsilon}_{1:t}  \right)\right] = \exp \left( \frac{R^2}{2} \| \zeta(x)\|^2\right).
\end{align*}

To bound the RHS, we focus on bounding the norm of the vector $\zeta(x)$. We have,
\begin{align*}
    \| \zeta(x) \|^2 & = \zeta^{\top}(x) \zeta(x) \\
    & = k_{\bfz_t, x}^{\top} (\lambda K_{\bfz_t, \bfz_t} +  K_{\bfz_t, \bfx_t} K_{\bfx_t, \bfz_t})^{-1} K_{\bfz_t, \bfx_t} K_{\bfx_t, \bfz_t} (\lambda K_{\bfz_t, \bfz_t} +  K_{\bfz_t, \bfx_t} K_{\bfx_t, \bfz_t})^{-1} k_{\bfz_t, x} \\
    & = k_{\bfz_t, x}^{\top} (\lambda K_{\bfz_t, \bfz_t} +  K_{\bfz_t, \bfx_t} K_{\bfx_t, \bfz_t})^{-1} (\lambda k_{ZZ} + K_{\bfz_t, \bfx_t} K_{\bfx_t, \bfz_t} - \lambda k_{ZZ}) (\lambda K_{\bfz_t, \bfz_t} +  K_{\bfz_t, \bfx_t} K_{\bfx_t, \bfz_t})^{-1} k_{\bfz_t, x} \\
    & = k_{\bfz_t, x}^{\top} (\lambda K_{\bfz_t, \bfz_t} +  K_{\bfz_t, \bfx_t} K_{\bfx_t, \bfz_t})^{-1} k_{\bfz_t, x}  - \lambda k_{\bfz_t, x}^{\top} (\lambda K_{\bfz_t, \bfz_t} +  K_{\bfz_t, \bfx_t} K_{\bfx_t, \bfz_t})^{-1} K_{\bfz_t, \bfz_t} (\lambda K_{\bfz_t, \bfz_t} +  K_{\bfz_t, \bfx_t} K_{\bfx_t, \bfz_t})^{-1} k_{\bfz_t, x} \\
    & \leq k_{\bfz_t, x}^{\top} (\lambda K_{\bfz_t, \bfz_t} +  K_{\bfz_t, \bfx_t} K_{\bfx_t, \bfz_t})^{-1} k_{\bfz_t, x} \\
    & \leq \frac{1}{\lambda} \left( k(x,x) -  k_{\bfz_t, x}^{\top} K_{\bfz_t, \bfz_t}^{-1} k_{\bfz_t, x} k_{\bfz_t, x}^{\top} ( K_{\bfz_t, \bfz_t} +  \lambda^{-1} K_{\bfz_t, \bfx_t} K_{\bfx_t, \bfz_t})^{-1} k_{\bfz_t, x}\right) \\
    & \leq \tilde{\sigma}_t^2(x).
\end{align*}

Thus, by using Chernoff-Hoeffding inequality, we can conclude that 
\begin{align*}
    |k_{\bfz_t, x}^{\top} (\lambda K_{\bfz_t, \bfz_t} +  K_{\bfz_t, \bfx_t} K_{\bfx_t, \bfz_t})^{-1} K_{\bfz_t, \bfx_t} {\epsilon}_{1:t}| \leq R \tilde{\sigma}_t(x) \sqrt{2 \log \left( \frac{1}{\delta} \right)}
\end{align*}
holds probability at least $1 - \delta$. On adding the two terms, we have the required answer.

\section{Proof of Theorem~\ref{Theorem:LB}}
\label{proof:LB}

To establish the lower bound on regret for collaborative learning with personalization, we focus on a simple system with two clients and a central server. The main idea in the proof is to construct a difficult instance for learning which enforces collaboration between clients with different objective functions. The construction of this instance of functions in the given RKHS is based on the ideas developed in~\cite{Scarlett2017}. We first describe some preliminaries of constructing functions of interest in the desired RKHS, which is adopted from the results in~\cite{Scarlett2017}, and then proceed with the proof of Theorem~\ref{Theorem:LB}. Throughout the remaining proof, we assume that the underlying RKHS corresponds to a given kernel belonging to the family of Squared Exponential and Matern kernels. \\


\subsection{Preliminaries on constructing functions in RKHS}

Consider the following multi-dimensional bump function defined over $\R^d$,
\begin{align*}
    \Phi(\xi) = \begin{cases} \exp \left( - \frac{1}{1- \| \xi\|^2} \right) & \text{ if } \|\xi\|^2 \leq 1 \\ 0 & \text{ otherwise }. \\  \end{cases}
\end{align*}
Let $\phi(x)$ be inverse Fourier transform of $\Phi(\xi)$. Since $\Phi$ is a function with finite energy, we can conclude that there exists a $\zeta > 0$ such that for $\|x\|_{\infty} > \zeta$, $\phi(x) \leq 0.5 \phi(0)$. Using $\phi(x)$, we define $\overline{\phi}(x;\theta)$ as the following function for any $\theta > 0$:
\begin{align*}
    \overline{\phi}(x; \theta, B) = \frac{\theta}{\phi(0)} \phi \left( \frac{x \zeta}{w_{\theta}} \right),
\end{align*}
where $w_{\theta}$ is a parameter that ensures that the RKHS norm of $\overline{\phi}$ is at most $B$. It is shown in~\cite{Scarlett2017} that for any given kernel belonging to the family of Squared Exponential and Matern kernels and constant $B > 0$, one can choose $w_{\theta}$, based on $\theta$, such that the corresponding RKHS norm of $\overline{\phi}$ is at most $B$ for all $\theta \leq \theta_0(B)$. In particular, for Square Exponential Kernel $\w_{\theta} = C \left( \log(B/\theta) \right)^{-1/2}$ and for Matern Kernel with smoothness parameter $\nu$, $w_{\theta} = C'(\theta/B)^{1/\nu})$ for some constants $C, C' > 0$. Here $\theta_0(B)$ is a threshold based on the kernel parameters and $B$. For exact expressions of these results, please refer to~\cite{Scarlett2017}. For simplicity of notation, we will only consider defining $\overline{\phi}$ for $\theta \leq \theta_0$ and implicitly assume the choice of $w_{\theta}$ is tuned to the particular $\theta$.  \\

Using $\overline{\phi}$, we define the following two functions:
\begin{align*}
    \varphi_0(x) := \overline{\phi}(x; \theta_1, B_0/2); \quad \varphi_1(x) := \overline{\phi}(x; \varepsilon, B_0/2).
\end{align*}
In the above definitions, $\theta_1 = 0.5 \cdot \theta_0(B_0/2)$, $B_0$ is the required upper bound on the observation functions and $\varepsilon > 0$ is a small constant whose value will be specified later. 

\subsection{Establishing the lower bound on regret}

With this basic tool for constructing RKHS functions, we are ready to build a difficult instance to establish our lower bound. For simplicity of exposition, we consider the case of $K = 2$ clients. The extension to the general case is straightforward which we briefly describe at the end of the proof. We consider the case of collaborative learning between two clients with observation functions $h_1$ and $h_2$ respectively and personalization parameters $\alpha_1$ and $\alpha_2$. We allow the pair $(\alpha_1, \alpha_2)$ to be any pair in $[0,1] \times [0,1]$ satisfying $\alpha_{\star} := \max\{ \min\{\alpha_1, 1 - \alpha_1\}, \min\{\alpha_2, 1 - \alpha_2\}\} > 0$. The objective functions of the two clients are given as $f_1 = \eta_1 h_1 + (1 - \eta_1)h_2$ and $f_2 = \eta_2 h_1 + (1 - \eta_2)h_2$, where $\eta_1 = (1 + \alpha_1)/2$ and $\eta_2 = (1 + \alpha_2)/2$. Consequently, $(\eta_1, \eta_2) \in \{(x,y) \in [0.5, 1]^2: \max\{x,y\} \geq 0.5 + \Delta, \min\{x,y\} \leq 1 - \Delta\}$ where $\Delta = \alpha_{\star}/2$. \\



We begin with considering the cases for which the pair $(\eta_1, \eta_2)$ satisfies the condition $\eta_1 \in [1/2, 1-\Delta]$ and $\eta_2 \in [1/2 + \Delta, 1]$. Note that the proof is identical for the pairs of $(\eta_1, \eta_2)$ satisfying $\eta_1 \in [1/2 + \Delta, 1]$ and $\eta_2 \in [1/2, 1 - \Delta]$ by simply exchanging the role of the clients $1$ and $2$. \\



To construct the instance of interest, consider the domain given as $\cX = \cX_0 \cup \cX_1$,
where $\cX_0 = [0,1]^d$ and $\cX_1$ is a shifted version of $\cX_0$, centered at $\bar{z} = (\upsilon_0 + 3/2,0,0,\dots,0)$. In the above definition, $\upsilon_0 = \inf \{\upsilon : \forall \ x \text{ s.t. } \|x\|_2 \geq \upsilon, \varphi_1(x) \leq 1/T^2\}$. The existence of such an $\upsilon_0$ is guaranteed by the fact that $\phi(x)$ decays to zero faster than any power of $\|x\|_2$. See Remark~\ref{remark:upsilon_choice} for more details.  \\

To define our observation functions, we consider a collection of $m_0 = \lfloor w_{\varepsilon}^{-d} \rfloor$ points, denoted by $\{z_1, z_2, \dots, z_{m_0}\}$, which forms a uniform grid over $\cX_0$. For a given pair $(\eta_1, \eta_2)$ we define our observation function $h_1$ and $h_2$ to be as follows:
\begin{align*}
    h_1(x) := -\frac{(1 - \eta_1)\Delta}{\eta_1 + \eta_2 - 1} \varphi_0(x - \bar{z}); \quad h_2(x) = \frac{\eta_1\Delta}{\eta_1 + \eta_2 - 1} \varphi_0(x - \bar{z}) + \varphi_1(x - z_M).
\end{align*}
In the above definition, $M$ is a random variable chosen uniformly over the set $\{1,2\dots, m_0\}$. Consequently, $h_2$ is a random function based on the value of $M$. It is not difficult to note that coefficients of $\varphi_0$ and $\varphi_1$ in both the functions are less than $1$ and hence the RKHS norm of both $h_1$ and $h_2$ is less than $B_0$, as required. For the observations, we assume that the noise is Gaussian with unit variance. \\

With this choice of observation functions, our objective functions are given as:
\begin{align*}
    f_1(x) :=  (1 - \eta_1)\varphi_1(x - z_M); \quad f_2(x) = \Delta \varphi_1(x) + \eta_2 \varphi_1(x - z_M).
\end{align*}
Note that client $2$ does not need to sample any point in $\cX_0$ to optimize its own objective, but would need to do so in order to help client $1$ optimize their objective function. This will form the crux of establishing the lower bound. \\

We introduce some additional notation for the analysis of the lower bound. Throughout the proof, a subscript $i$ will be used to refer to quantities related to client $i = 1,2$. Let $\{\cR_m\}_{m = 1}^{m_0}$ be partition of $\cX_0$ consisting of $m_0$ regions centred on the points $\{z_1, z_2, \dots, z_{m_0}\}$. Let $(x_{i,t}, y_{i,t})$ denote the input observation pair at time $t$ and consequently we define $j_{i,t} = \{ m : x_{i,t} \in \cR_m \}$. Using this, we can define $N_{i,m} = \sum_{t = 1}^T \mathbbm{1}\{ j_{i,t} = m \}$ for $m \in \{1,2,\dots, m_0\}$.  \\

We use $\bP_{m}( \cdot)$ to denote the distribution of the rewards when $M = m$ and $\bP_{*}(\cdot)$ to denote the joint distribution of the rewards and the random variable $M$. Let $\bP_0$ denote the reward distribution for the case when $h_2(x) = \dfrac{\eta_1\Delta}{\eta_1 + \eta_2 - 1} \varphi_0(x - \bar{z})$. 
We denote the corresponding expectations as $\E_{m}[\cdot], \E_{*}[\cdot]$, and $\E_{0}[\cdot]$.

Let $\pi = (\pi_1, \pi_2)$ denote the combined strategy for the two clients. For simplicity, we assume that the policy is deterministic. The arguments can be extended for the case of randomized strategies in a straightforward manner. Furthermore, since there is no constraint on communication, we assume that all the observations are exchanged between the two clients. This assumption is without loss of generality as any policy that communicates only a subset of observations can not do any better than the policy that communicates all the observations. Let $\cI_t = (y_{1,t}, y_{2,t})$ denote information gained at time $t$ and $\cI^{(t)} = (\cI_1, \cI_2, \dots, \cI_t)$ denote information state at time $t$. Since the policy is deterministic, $x_{1,t}$ and $x_{2,t}$ are deterministic functions of $\cI^{(t-1)}$. \\

We state the following lemma that will be useful in the analysis.
\begin{lemma}
Let $a$ be any function defined on the information state vector $\cI^{(T)}$ whose range is in the bounded interval $[0,A]$. Then for any $m \in \{1,2,\dots,m_0\}$, we have,
\begin{align*}
    \E_{\bP_m}[a(\cI^{(T)})] \leq \E_{\bP_0}[a(\cI^{(T)})] + A \sqrt{\frac{1}{2T^3} + \sum_{l = 1}^{m_0} \E_{0} [N_{2,l}] D_{m}^{l}},
\end{align*}
where $\displaystyle D_{m}^{l} = \max_{x \in \cR_l} \textsc{D}(\bP_{0}(  y | x) \| \bP_{m}( y | x) )$.
\label{lemma:kl_divergence}
\end{lemma}
The lemma is an adapted version of Lemma B.1 in~\cite{Auer1995} and Lemmas 3 and 4 in~\cite{Scarlett2017}. The proof of the lemma is provided at the end of the section for completeness. \\

We first focus on bounding the regret of the first client under the policy $\pi$ denoted by $R_{1, \pi}$. We fix the value of $M = m$. To lower bound the expected regret incurred by the first client, we upper bound the expected reward earned by the client under the policy $\pi$. 
\begin{align*}
    \sum_{t = 1}^{T} \E_{m}[f_1(x_{1,t})] & = \sum_{t = 1}^{T} \E_{m}\left[  \1\{x_{1,t} \in \cX_1\} f_1(x_{1,t}) + \sum_{l = 1}^{m_0} \1\{x_{1,t} \in \cR_l\} f_1(x_{1,t}) \right] \\
    & \leq \sum_{t = 1}^{T} \E_{m}\left[  \1\{x_{1,t} \in \cX_1\} \frac{1-\eta_1}{T^2} + (1 - \eta_1)\sum_{l = 1}^{m_0} \1\{x_{1,t} \in \cR_l\} \vartheta_{m}^{l} \right] \\
    & \leq  \frac{1 -\eta_1 }{T} + (1 - \eta_1)\sum_{l = 1}^{m_0} \sum_{t = 1}^{T} \E_{m}\left[ \1\{x_{1,t} \in \cR_l\}  \right] \vartheta_{m}^{l} \\
    & \leq  \frac{1- \eta_1}{T} + (1 - \eta_1)\sum_{l = 1}^{m_0} \E_m[N_{1,l}] \vartheta_{m}^{l}.
\end{align*}
In the above expression, $\vartheta_{m}^{l} = \max_{x \in \cR_l} \varphi_1(x - z_{m})$. On plugging in the result of Lemma~\ref{lemma:kl_divergence} for $N_{1,l}$ in the above expression, we obtain
\begin{align*}
    \sum_{t = 1}^{T} \E_{m}[f_1(x_{1,t})] & \leq  \frac{1-\eta_1}{T} + (1- \eta_1)\sum_{l = 1}^{m_0} \E_m[N_{1,l}] \vartheta_{m}^{l} \\
    & \leq  \frac{1-\eta_1}{T} + (1- \eta_1)\sum_{l = 1}^{m_0} \vartheta_{m}^{l} \left( \E_0[N_{1,l}] + T \sqrt{\frac{1}{2T^3} + \sum_{r = 1}^{m_0} \E_{0} [N_{2,r}] D_{m}^{r}} \right).
\end{align*}
Averaging over the choice of $M = m$, we obtain
\begin{align*}
    \frac{1}{m_0} \sum_{m = 1}^{m_0} \sum_{t = 1}^{T} \E_{m}[f_1(x_{1,t})] & \leq  \frac{1}{m_0} \sum_{m = 1}^{m_0} \frac{1-\eta_1}{T} + \frac{1 - \eta_1}{m_0} \sum_{m = 1}^{m_0} \sum_{l = 1}^{m_0} \vartheta_{m}^{l} \left( \E_0[N_{1,l}] + T \sqrt{\frac{1}{2T^3} + \sum_{r = 1}^{m_0} \E_{0} [N_{2,r}] D_{m}^{r}} \right)
\end{align*}
\begin{align*}
    \implies \sum_{t = 1}^{T} \E_{*}[f_1(x_{1,t})] & \leq  \frac{1-\eta_1}{T} + \frac{1-\eta_1}{m_0}  \sum_{l = 1}^{m_0} \left(\sum_{m = 1}^{m_0} \vartheta_{m}^{l}\right)  \E_0[N_{1,l}] +   \frac{T(1- \eta_1)}{m_0} \sum_{m = 1}^{m_0} \left( \sum_{l = 1}^{m_0} \vartheta_{m}^{l} \right) \sqrt{\frac{1}{2T^3} + \sum_{r = 1}^{m_0} \E_{0} [N_{2,r}] D_{m}^{r}}.
\end{align*}

Using the result from Lemma 4 in~\cite{Scarlett2017}, we can conclude that there exists constants $C_1, C_2, C_3 > 0$ such that
\begin{align*}
    \sum_{m = 1}^{m_0} \vartheta_{m}^{l} \leq C_1 \varepsilon; \quad  \sum_{l = 1}^{m_0} \vartheta_{m}^{l}  \leq C_2 \varepsilon; \quad \sum_{m = 1}^{m_0} D_{m}^{r} \leq C_3 \varepsilon^2.
\end{align*}
Using the above relations, we can rewrite the previous equation as
\begin{align*}
    \sum_{t = 1}^{T} \E_{*}[f_1(x_{1,t})] & \leq  \frac{1-\eta_1}{T} + \frac{1-\eta_1}{m_0}  \sum_{l = 1}^{m_0} \left(\sum_{m = 1}^{m_0} \vartheta_{m}^{l}\right)  \E_0[N_{1,l}] +   \frac{T(1 -\eta_1)}{m_0} \sum_{m = 1}^{m_0} \left( \sum_{l = 1}^{m_0} \vartheta_{m}^{l} \right) \sqrt{\frac{1}{2T^3} + \sum_{r = 1}^{m_0} \E_{0} [N_{2,r}] D_{m}^{r}} \\
    & \leq  \frac{1-\eta_1}{T} + \frac{C_1(1- \eta_1)\varepsilon}{m_0}  \sum_{l = 1}^{m_0} \E_0[N_{1,l}] +   \frac{C_2 T(1 - \eta_1) \varepsilon}{m_0} \sum_{m = 1}^{m_0} \sqrt{\frac{1}{2T^3} + \sum_{r = 1}^{m_0} \E_{0} [N_{2,r}] D_{m}^{r}} \\
    & \leq  \frac{1-\eta_1}{T} + \frac{C_1(1- \eta_1)\varepsilon T}{m_0}  +  C_2 T(1 - \eta_1) \varepsilon  \sqrt{ \frac{1}{m_0}  \sum_{m = 1}^{m_0} \left(\frac{1}{2T^3} + \sum_{r = 1}^{m_0} \E_{0} [N_{2,r}] D_{m}^{r} \right)} \\
    & \leq  \frac{1-\eta_1}{T} + \frac{C_1(1- \eta_1)\varepsilon T}{m_0}  +  C_2 T(1 - \eta_1) \varepsilon  \sqrt{  \frac{1}{2T^3} + \frac{1}{m_0}   \sum_{r = 1}^{m_0} \E_{0} [N_{2,r}] \left(\sum_{m = 1}^{m_0} D_{m}^{r} \right)} \\
    & \leq  \frac{1 -\eta_1}{T} + \frac{C_1(1- \eta_1)\varepsilon T}{m_0}  +  C_2 T(1 - \eta_1) \varepsilon  \sqrt{  \frac{1}{2T^3} + \frac{C_3 \varepsilon^2}{m_0}   \sum_{r = 1}^{m_0} \E_{0} [N_{2,r}] }.
\end{align*}
In the third step, we used the Jensen's inequality. \\

Using the upper bound on the reward obtained above, the regret can be lower bounded as
\begin{align*}
    \E_{*}[R_{1,\pi}] \geq (1- \eta_1)\varepsilon T \left(1 - \frac{1}{\varepsilon T^2} - \frac{C_1}{m_0}  -  C_2   \sqrt{  \frac{1}{2T^3} + \frac{C_3 \varepsilon^2 \overline{N}}{m_0}    }  \right),
\end{align*}
where $\displaystyle \overline{N} = \sum_{r = 1}^{m_0} \E_{0} [N_{2,r}]$. \\

We now focus on lower bounding the regret incurred by client $2$. Note that for sufficiently small $\varepsilon$, the maximizer for $f_2$ clearly lies in $\cX_1$. As a result, whenever client $2$ queries a point in $\cX_0$, it incurs a regret of at least $\Delta \theta_1 - \varepsilon \geq \Delta \theta_1/2$. Thus, for any $M = m$, the regret incurred by client $2$ can be lower bounded as
\begin{align*}
    \E_{m}[R_{2, \pi}] \geq \frac{\Delta \theta_1}{2} \sum_{r = 1}^{m_0} \E_{m} [N_{2,r}] \geq \frac{\Delta \theta_1}{2} \sum_{r = 1}^{m_0} \E_{0} [N_{2,r}].
\end{align*}
Consequently,
\begin{align*}
    \E_{*}[R_{2, \pi}] \geq \frac{\Delta \theta_1}{2} \sum_{r = 1}^{m_0} \E_{0} [N_{2,r}] = \frac{\Delta \theta_1 \overline{N}}{2}.
\end{align*}

Recall that $m_0$ is a function of $\varepsilon$ based on the kernel, making the lower bounds on regret of both the clients a function of $\varepsilon$. We choose a value of $\varepsilon$ based on $T$ to obtain a lower bound on the regret. Firstly, upon constraining $\varepsilon > 3/T^2$ and choosing a large enough value of $T$ (or equivalently, a small enough $\varepsilon$), we can ensure that $C_1/m_0 < 1/3$. Consequently, we have,
\begin{align*}
    \E_{*}[R_{1,\pi}] & \geq (1- \eta_1)\varepsilon T \left(\frac{1}{3}  -  C_2   \sqrt{  \frac{1}{2T^3} + \frac{C_3 \varepsilon^2 \overline{N}}{m_0}    }  \right), \\
    \E_{*}[R_{2, \pi}] & \geq \frac{\Delta \theta_1 \overline{N}}{2}.
\end{align*}

We set $\varepsilon = C_4 (m_0/\overline{N})^{1/2}$ for an appropriately chosen constant $C_4 > 0$, which yields $\E_{*}[R_{1,\pi}]  = \Omega(T(m_0/\overline{N})^{1/2})$ and $\E_{*}[R_{2, \pi}] = \Omega(\overline{N})$. Upon equating the two expressions to ensure the tightest bound, we obtain that $\overline{N} = \Theta(T^{2/3} m_0^{1/3}) \implies \varepsilon = \Theta((m_0/T)^{1/3})$. \\

Plugging in the relation between $m_0$ and $\varepsilon$ based on the kernel parameters, we arrive at result. In particular, for a Matern Kernel with smoothness parameter $\nu$, $m_0 = \Theta(\varepsilon^{-d/\nu})$, which gives us a values of $\varepsilon = \Theta(T^{-\nu/(3\nu + d)})$ and consequently a regret lower bound of $\Omega(\alpha_{\star} T^{(2\nu + d)/(3\nu + d)})$. Similarly, for the Squared Exponential Kernel, repeating a similar process yields a lower bound of $\Omega(\alpha_{\star} T^{2/3} (\log(T))^{d/6})$. The leading constant $\alpha_{\star}$ follows directly from the lower bounds $\E_{*}[R_{1,\pi}]$ and $\E_{*}[R_{2, \pi}]$ where $1 - \eta_1 \geq \Delta = \alpha_{\star}/2$.


\begin{remark}
\emph{Choosing the value of $\upsilon_0$}: Note that for the final choice of $\varepsilon$, the corresponding parameter $w_{\varepsilon} = \Theta(T^{-1/(3\nu + d)})$. Since the function $\phi(x)$ decays to zero faster than any power of $\|x\|_2$, there exists constants $C_5, C_6$ such that for all $x$ with $\|x\|_2 > C_5$, $\phi(x) \leq C_6/\|x\|_2^{2(3\nu + d)}$. Consequently, for any $\nu > C_5$, $\varphi_1(x) \leq 1/T^2$ due to the scaling with $w_{\varepsilon}$. Thus, one can guarantee $\upsilon_0$ to be finite and it can be chosen based on the decay of $\phi(x)$ and the kernel parameters.
\label{remark:upsilon_choice}
\end{remark}

\subsection{Extension to the case of \texorpdfstring{$K > 2$}{K > 2}}

For the case of $K > 2$, the approach is very similar to the case of $K = 2$. The only difference is in the construction of the observation and objective functions. WLOG, let $\alpha_1 \leq \alpha_2 \leq \dots \leq \alpha_K$. Similar to the case of $K = 2$, let $\eta_i = (1 + (K-1)\alpha_i)/K$ and $\displaystyle f_i = \eta_i h_i + \frac{1 - \eta_i}{K - 1}\sum_{k = 1, k \neq i}^K h_k$. The observation functions are $h_1, h_2, \dots, h_K$ are defined as follows:
\begin{align*}
    h_1 & = -\frac{(1 - \eta_1)\lambda_0}{K - 1}\varphi_0(x - \bar{z}); \\
    h_i & = -\frac{\eta_1\lambda_0}{K - 1}\varphi_0(x - \bar{z}); \quad \forall \ i =\{2,3,\dots, K - 1\}, \\
    h_K & = \eta_1 (K - 1) \lambda_0 \varphi_0(x - \bar{z}) + \varphi_1(x - z_M),
\end{align*}
where $\bar{z}$ and $z_M$ are defined as before and $\lambda_0 := \dfrac{\Delta'(K - 1)^2}{\eta_1\eta_K(K^3 - 2K^2) + \eta_K - \eta_1(K^2 - 3K + 1) - 1}$ with $\Delta' := (K - 1)\alpha_{\star}/K$. \\

With this choice of the observation functions, the objective function of client $i = \{1,2,\dots, K-1\}$ is given by
\begin{align*}
    f_i = \left( \frac{\eta_i + (2K -1)\eta_1 - K^2 \eta_i\eta_1 -1}{(K-1)^2}\right)\lambda_0 \varphi_0(x - \bar{z}) + \frac{1 - \eta_i}{K - 1}\varphi_1(x - z_M).
\end{align*}
Since $\eta_i \geq \eta_1$ (because $\alpha_i \geq \alpha_1$), $ \dfrac{\eta_i + (2K -1)\eta_1 - K^2 \eta_i\eta_1 -1}{(K-1)^2} \leq - \left( \dfrac{\eta_1 K - 1}{K - 1} \right)^2 \leq 0$. This implies that the maximizer of $f_i$ lies in $\cX_0$. On the other hand, the objective function of client $K$ is given as
\begin{align*}
    f_K = \Delta' \varphi_0(x - \bar{z}) + \frac{1 - \eta_K}{K - 1}\varphi_1(x - z_M).
\end{align*}
Similar to the case of $K = 2$, the maximizer of $f_K$ lies in $\cX_1$ but it would need to sample points in $\cX_0$ in order to help other clients. The rest of the argument follows the same way as for the case of $K = 2$ leading to the same conclusion.

\subsection{Proof of Lemma~\ref{lemma:kl_divergence}}
\label{proof:kl_divergence}

The proof of this lemma is similar to the lemma obtained for the lower bound in adversarial bandits in~\cite{Auer1995}. We have,
\begin{align*}
    \E_{m}[a(\cI^{(T)})] - \E_{0}[a(\cI^{(T)})] & = \sum_{\cI^{(T)}} a(\cI^{(T)}) \left(\bP_m(\cI^{(T)}) - \bP_{0}(\cI^{(T)})\right) \\
    & \leq \sum_{\cI^{(T)} :\bP_m(\cI^{(T)}) \geq \bP_{0}(\cI^{(T)}) } a(\cI^{(T)}) \left(\bP_m(\cI^{(T)}) - \bP_0(\cI^{(T)})\right) \\
    & \leq A \sum_{\cI^{(T)} :\bP_m(\cI^{(T)}) \geq \bP_{0}(\cI^{(T)}) }  \left(\bP_m(\cI^{(T)}) - \bP_{0}(\cI^{(T)})\right) \\
    & \leq \frac{A}{2}  \left\|\bP_m(\cI^{(T)}) - \bP_{0}(\cI^{(T)})\right\|_1 \\
    & \leq A \sqrt{ \textsc{D} \left( \bP_0(\cI^{(T)})\| \bP_{m}(\cI^{(T)})\right)}.
\end{align*}
In the last step, we have used Pinsker's inequality and $\textsc{D}(\bP \| \bQ)$ denotes the Kullback Leibler Divergence between two distributions $\bP$ and $\bQ$. \\

We can compute the KL divergence between $\bP_{0}$ and $\bP_{m}$ using the chain rule. We have,
\begin{align*}
    \textsc{D}(\bP_{0} \| \bP_{m} ) & = \sum_{t = 1}^T  \textsc{D}(\bP_{0}(\cI_t | \cI^{(t-1)}) \| \bP_{m}(\cI_t | \cI^{(t-1)}) )  \\
    & = \sum_{t = 1}^T \textsc{D}(\bP_{0}( (y_{1,t}, y_{2,t}) | \cI^{(t-1)}) \| \bP_{m}( (y_{1,t}, y_{2,t}) | \cI^{(t-1)}) ) \\
    & = \sum_{t = 1}^T \textsc{D}(\bP_{0}( y_{1,t} | \cI^{(t-1)}) \| \bP_{m}( y_{1,t} | \cI^{(t-1)}) ) + \textsc{D}(\bP_{0}( y_{2,t} | \cI^{(t-1)}) \| \bP_{m}( y_{1,t}  | \cI^{(t-1)}) ) \\
    & = \sum_{t = 1}^T \textsc{D}(\bP_{0}(  y_{2,t} | \cI^{(t-1)}) \| \bP_{m}( y_{2,t} | \cI^{(t-1)}) ) \\
    & = \sum_{t = 1}^T  \E_{0} \left[ \left(\mathbbm{1} \{ x_{2,t} \in \cX_1 \} + \sum_{l = 1}^{m_0} \mathbbm{1} \{ x_{2,t} \in \cR_l \} \right)  \textsc{D}(\bP_{0}(  y_{2,t} | x_{2,t}) \| \bP_{m}( y_{2,t} | x_{2,t}) ) \right].
\end{align*}
The third step uses the independence of observations across clients (conditioned on the query points) and the fourth step follows by noting that reward distribution of client $1$ is the same under both $\bP_0$ and $\bP_m$. Since $\bP_0$ and $\bP_m$ are Gaussian distributions with unit variance, their KL divergence is half the square of difference of their means. Using the definition of $\upsilon_0$, we can conclude that $\textsc{D}(\bP_{0}(  y_{2,t} | x_{2,t}) \| \bP_{m}( y_{2,t} | x_{2,t}) ) \leq 1/(2T^4)$ for all $x_{2,t} \in \cX_1$. Using the definition of $D_{m}^{l}$ for the regions in $\cX_0$, we can rewrite the previous equation as,
\begin{align*}
    \textsc{D}(\bP_{0} \| \bP_{m} ) & = \sum_{t = 1}^T  \E_{0} \left[ \left(\mathbbm{1} \{ x_{2,t} \in \cX_1 \} + \sum_{l = 1}^{m_0} \mathbbm{1} \{ x_{2,t} \in \cR_l \} \right)  \textsc{D}(\bP_{0}(  y_{2,t} | x_{2,t}) \| \bP_{m}( y_{2,t} | x_{2,t}) ) \right] \\
    & \leq \sum_{t = 1}^T  \E_{0} \left[ \mathbbm{1} \{ x_{2,t} \in \cX_1 \} \cdot \frac{1}{2T^2} + \sum_{l = 1}^{m_0} \mathbbm{1} \{ x_{2,t} \in \cR_l \} D_{m}^{l} \right] \\
    & \leq T \cdot \frac{1}{2T^4} + \sum_{l = 1}^{m_0}    \E_{0} \left[\sum_{t = 1}^T \mathbbm{1} \{ x_{2,t} \in \cR_l \}  \right] D_{m}^{l} \\
    & \leq \frac{1}{2T^3} + \sum_{l = 1}^{m_0} \E_{0} [N_{2,l}] D_{m}^{l}.
\end{align*}

\section{Supplemental Material on Experiments}
\label{sec:experiments}



In this section, we provide further details on the experiments. Branin is a standard benchmark function for Bayesian Optimization and its analytical expression is given below~\cite{Azimi2012, Picheny2013}
 $$\displaystyle B(x, y) = -\frac{1}{51.95}\left( \left( v - \frac{5.1u^2}{4 \pi^2} + \frac{5u}{\pi} - 6\right)^2 + \left(10 - \frac{10}{8\pi} \right)\cos(u) - 44.81\right),$$ where $u = 15x - 5$ and $v = 15y$. We have also used the function from~\cite{sobester2008engineering} which is given by
 $$F(x,y) = (6x - 2)^2 \sin(12x- 4) + (6y - 2)^2 \sin(12y- 4).$$

The domain is set to $\mathcal{X} = [0,1]^2$. The details of IGP-UCB, PI and EI are provided below. 

\begin{enumerate}
    \item IGP-UCB: The algorithm is implemented exactly as outlined in~\cite{Chowdhury2017} with $B$ (in scaling parameter $\beta_t$) set to $15$ . The parameters $R$ and $\delta_0$ are set to $10^{-2}$ and $10^{-3}$. $\gamma_t$ was set to $\log t$. 
    \item Expected Improvement(EI)/Probability of Improvement (PI): Similar to IGP-UCB, EI and PI select the observation points based on maximizing an index often referred to as an acquisition function. The acquisition function of EI is $(\mu(x) - f^* - \varepsilon)\Phi\left( z \right) + \sigma(x)\phi(z)$, where $z = \frac{\mu(x) - f^* - \varepsilon}{\sigma(x)}$. The acquisition function of PI is $\Phi(z)$, where $z = \frac{\mu(x) - f^* - \xi}{\sigma(x)}$. Here, $\Phi(\cdot)$ and $\phi(\cdot)$ denote the CDF and PDF of a standard normal random variable. $f^*$ is set to be the maximum value of $\mu(x)$ among the current observation points. The parameters $\varepsilon$ and $\xi$ are used to balance the exploration-exploitation trade-off. We follow~\cite{hoffman2011portfolio} that showed the best choice of these parameters are small non-zero values. In particular, $\varepsilon$ and $\xi$ are both set to $0.01$.
\end{enumerate}

\end{document}